\title[Self-Consistency of the Fokker-Planck Equation]{Self-Consistency of the Fokker-Planck Equation}
\author{%
 \Name{Zebang Shen} \Email{zebang@seas.upenn.edu}\\
 \addr University of Pennsylvania
 \AND
 \Name{Zhenfu Wang} \Email{zwang@bicmr.pku.edu.cn} \\
 \addr Peking University
 \AND
 \Name{Satyen Kale} \Email{satyen.kale@gmail.com} \\
 \addr Google
 \AND
 \Name{Alejandro Ribeiro} \Email{aribeiro@seas.upenn.edu} \\
 \addr University of Pennsylvania
 \AND
 \Name{Amin Karbasi} \Email{amin.karbasi@yale.edu} \\
 \addr Yale, Google
 \AND
 \Name{Hamed Hassani} \Email{hassani@seas.upenn.edu} \\
 \addr University of Pennsylvania
}
\newcommand{\ito}{{It\^o}}
\newcommand{\gronwall}{{Gr\"onwall}}
\newcommand{\ud}{\mathrm{d}}
\newcommand{\udiv}{\mathrm{div}}
\newcommand{\fracd}[2]{\frac{\ud #1}{\ud  #2}}
\begin{document}

\maketitle

\begin{abstract}
	The Fokker-Planck equation (FPE) is the partial differential equation that governs the density evolution of the \ito\ process and is of great importance to the literature of statistical physics and machine learning.
	The FPE can be regarded as a continuity equation where the change of the density is completely determined by a time varying velocity field.
	Importantly, this velocity field also depends on the current density function. As a result, the ground-truth velocity field can be shown to be the solution of a fixed-point equation, a property that we call \textit{self-consistency}. 
	In this paper, we exploit this concept to design a potential function of the hypothesis velocity fields, and prove that, if such a function diminishes to zero during the training procedure, the trajectory of the densities generated by the hypothesis velocity fields converges to the solution of the FPE in the Wasserstein-2 sense.  
	The proposed potential function is amenable to neural-network based parameterization as the stochastic gradient with respect to the parameter can be efficiently computed.
	Once a parameterized model, such as Neural Ordinary Differential Equation is trained, we can generate the entire trajectory to the FPE.
\end{abstract}
\begin{keywords}%
  Fokker Planck equation
\end{keywords}

\section{Introduction}
We consider the Fokker-Planck equation (FPE) that corresponds to the \ito\ process with a constant diffusion coefficient, which can be written as 
\begin{equation}
	\label{eqn_FPE}
%		\begin{cases}
	\fracpartial{}{t}\alpha(t, x) + \udiv \Big(\alpha(t, x) (\underbrace{ - \nabla V(t, x) - \nabla \log\alpha(t, x) }_{\text{underlying velocity field } f^*(t, x)})\Big) = 0, 
%		\end{cases}
\end{equation}
subject to the initial condition 
\begin{equation}
	\label{eqn_FPE_initial}
	\alpha(0, x) = \alpha_0(x).
\end{equation}
Here, $\alpha: [0, T]\times \XM \rightarrow \RBB$ is a time varying density function defined on $\XM\subseteq\RBB^d$, $V: [0, T]\times\XM \rightarrow \RBB$ is a known potential function that determines the drifting term; $\udiv$ and $\nabla$ denote the divergence and gradient operator with respect to the spatial variable $x$ respectively.
The boundary condition that we impose will be introduced in section \ref{section_preliminary}.

FPE is a fundamental problem in the literature of statistical physics due to its wide applications in thermodynamic system analysis \citep{markowich2000trend, lucia2015fokker, qi2016low} and is one of the key equations in the research of the mean field game \citep{cardaliaguet2020introduction,gomes2014mean}.
Recently, it has also been used to model the dynamics of the stochastic gradient descent method on neural networks \citep{chizat2018global,sonoda2019transport,sirignano2020mean,fang2021modeling} and the dynamics of the R\'enyi differential privacy \citep{chourasia2021differential}, and has become a fundamental tool for learning complex distributions and deep generative models due to its deep connection to the Wasserstein gradient flow \citep{sohl2015deep,hashimoto2016learning,liu2019understanding,song2020score,solin2021scalable,mokrov2021large}.
There is a plethora of previous works trying to solve FPE numerically, including the classic mesh-based finite difference and finite volume methods \citep{carrillo2015finite,bailo2018fully}, the stochastic particle methods that are based on the discretization of the Ito SDE \citep{dalalyan2017theoretical,li2019stochastic,li2021sqrt}, the deterministic particle methods that utilize the Gaussian mollifier to approximate the dynamic \citep{degond1990deterministic}, the variational methods that are built on the Wasserstein gradient flow interpretation of the FPE \citep{bernton2018langevin,liu2020neural,carrillo2021primal, ambrosio2005gradient,jko}, and most recently the physics-informed neural network approach that directly parameterize the solution to the FPE and cast the FPE as a root finding problem \citep{han2018solving,long2018pde,long2019pde,raissi2019physics,blechschmidt2021three}.
We note that in all previous approaches, the entity under consideration, i.e. the function to be approximated or learned, is {explicitly} the solution to the PDE \eqref{eqn_FPE}, which is a time-varying probability density function.

In this work, we take a different route:
Instead of approximating the solution to the FPE, we propose to learn the \emph{underlying velocity field} that drives the evolution of the FPE.
The solution to the FPE can then be {implicitly} recovered by the learned velocity field.
Our work is built on a concept called the \emph{self-consistency} of the Fokker-Planck equation: A velocity field that correctly recovers the solution to the FPE should be a fixed point to a \textit{velocity-consistency transformation} (defined in Eq.~\eqref{eqn_transform_A}) derived from the FPE.
The main contribution of our work is summarized as follows.
\begin{quote}
    We establish the theoretical foundation of learning the underlying velocity field of the FPE. Specifically, we design a potential function $R$ for the hypothesis velocity fields $\{f_n\}$ that describes the self-consistency of the Fokker-Planck equation and show that if $R(f_n) \rightarrow 0$ as $n\rightarrow \infty$, the trajectory of distributions generated by $f_\infty$ recovers the solution to the FPE in the Wasserstein-2 sense.
\end{quote}
Moreover, when the hypothesis velocity field is parameterized as a Neural Ordinary Differential Equation $f_\theta$ \citep{chen2018neural}, we discuss how the stochastic gradient of the proposed potential function $R(f_\theta)$ with respect to the parameter $\theta$ of the neural network can be efficiently computed.
Therefore, once $f_\theta$ is trained via stochastic optimization methods, our approach returns an approximate solution to the FPE, which is non-negative and has unit mass, i.e. it integrates to $1$ on $\XM$. These fundamental properties are crucial in real-world physics models and are not guaranteed in previous neural network based approaches.

\section{Preliminaries} \label{section_preliminary}
\paragraph{Boundary Condition}

We assume that the process takes place on a $d$-dimensional box centered around the origin, i.e. $\XM = [-\frac{l}{2}, \frac{l}{2}]^d$.
We consider the periodic boundary condition:
\begin{align}
	\alpha\left(t, (\cdots, -\frac{l}{2}, \cdots)\right) =&\ \alpha\left(t, (\cdots, \frac{l}{2}, \cdots)\right)  \label{eqn_bc_0}
	\\ \fracpartial{}{x} \alpha\left(t, (\cdots, -\frac{l}{2}, \cdots)\right) =&\ \fracpartial{}{x} \alpha\left(t, (\cdots, \frac{l}{2}, \cdots)\right). \label{eqn_bc_1}
\end{align}
The above condition is the same as identifying the points on the corresponding boundaries which happens when the spatial domain is a \emph{torus}.
Note that on a torus, the particle that leaves the torus on the boundary will reenter the domain $\XM$ through the boundary such that $l/2$ (resp., $-l/2$) is replaced by $-l/2$ (resp., $l/2$) in the same coordinate.

The periodic boundary condition (torus) is commonly used in the PDE analysis (e.g. see \citep{JABIN20163588}) with an important technical merit that the integration of a periodic function on the boundary is naturally zero and hence the analysis using integration by parts can be simplified. 
Moreover, it also allows us to focus on the behavior of the PDE system on compact domains without sacrificing the generality, since we can always set the diameter of the torus to be sufficiently large.
We emphasize that to the ML community, this is usually the case of interest: Only in a bounded domain can we expect a neural ODE to be able to represent the underlying velocity field of the FPE, since the neural network is \emph{not} a universal function approximator on unbounded domains. 

In the following, we refer to periodic functions with a period of $l$ as \emph{$l$-periodic}.

\paragraph{Velocity Field and the Induced Push-forward Map}
A velocity field is map $f:[0, T]\times\XM\rightarrow\RBB^d$ that determines the movement of a particle $x(t)$:
\begin{equation} \label{eqn_velocity_field}
	\fracd{}{t}x(t) = f(t, x(t))
\end{equation}
A velocity field $f(t, x)$ induces a push-forward map $X(t, x; f)$ via integrating over time
\begin{equation} \label{eqn_push_forward_map}
	X(t, x_0; f) = x_0 + \int_0^t f(s, x_s) \ud s,
\end{equation}
where $\{x_s\}_{s=0}^t$ is the trajectory of a particle following the velocity field $f(t, x)$ with the initial position $x_0$.
Note that the map $X(t, x; f)$ is invertible under the assumption that $f(t, x)$ is Lipschitz continuous  in $x$ for all $t$. Additionally $X(t, x; f) - x$ is $l$-periodic if we further assume that $f$ is $l$-periodic: For any $i\in\{1,\ldots, d\}$
\begin{equation} \label{eqn_X_x_l_periodic}
	X(t, x_0 + le_i; f) - (x_0 + le_i) = \int_{0}^{t} f(s, x_s + le_i)\ud s = \int_{0}^{t} f(s, x_s)\ud s = X(t, x_0; f) - x_0.
\end{equation}
%where the second equality holds due to the $l$-periodicity of $f$.
When the velocity $f$ is clear from the context, we omit the dependence of $X$ on $f$ and write $X(t, x)$, for simplicity.

\paragraph{Neural Ordinary Differential Equation}
The neural ordinary differential equation (NODE) is a favorable instance of the hypothesis class since neural networks are  universal function approximators in a bounded domain and have achieved great recent success in machine learning \citep{chen2018neural,dupont2019augmented,choromanski2020ode}.
Let $f: \RBB \times \RBB^d \times \Theta \rightarrow \RBB^d$ be a neural network parameterized by $\theta\in\Theta$. 
A $d$-dimensional NODE in can be described as
\begin{equation} \label{eqn_NODE}
	\fracd{}{t}x(t) = f_\theta(t, x(t)).
\end{equation}
%In this paper, we will often refer to $f$ as the \emph{hypothesis velocity field} of the NODE \eqref{eqn_NODE}.

To accommodate the periodic boundary conditions \eqref{eqn_bc_0} and \eqref{eqn_bc_1}, we need the NODE to be $l$-periodic. 
Consider a $2d$-dimensional NODE with velocity $\tilde f$. We can construct a $d$-dimensional NODE with the following hypothesis velocity field 
\begin{equation}
	f_\theta(t, x(t)) = \tilde f_\theta \left(t, \begin{pmatrix}
		\sin \frac{2\pi}{l} x(t) \\
		\cos \frac{2\pi}{l} x(t)
	\end{pmatrix}\right).
\end{equation}
Here $\sin$ and $\cos$ are applied in an element-wise manner.

\paragraph{Notations}
Consider the $d$-dimensional index vector $a = (a_1,\ldots, a_d)$ with $a_i \in \NBB$ and $\|a\|_1 = k$ and a map $f:\RBB^d \rightarrow \RBB^d$. Denote
\begin{equation} \label{eqn_differentials}
	f^{(a)} = \left[\frac{\partial^{k} f_1}{\partial x_1^{a_1}\ldots \partial x_d^{a_d}}, \cdots, \frac{\partial^{k} f_d}{\partial x_1^{a_1}\ldots \partial x_d^{a_d}}\right],
\end{equation}
where $f_i$ denotes the $i$th entry of $f$.
We define the $k$th order Sobolev norm of a map $f:\XM \rightarrow \RBB^d$ with a base measure $\mu \in \MM_+^1(\XM)$ by
\begin{equation}
	\|f\|_{W^{k, 2}(\mu)} = \left(\sum_{i=0}^{k}\int_\XM \|f^{(i)}(x)\|^2  \mu(x) \ud x \right)^{\frac{1}{2}}.
\end{equation}
Here $f^{(k)} = \{f^{(a)}\}_{a: \|a\|_1 = k}$ denotes the collection of all $k$th order partial derivatives of the map $f$ and is regarded as a $d^{k+1}$-dimensional vector. 
We use $\|\cdot\|$ to denote the spectral norm for matrices and tensors and the standard $\ell_2$-norm for vectors.\\
We use $\{e_i\}$ to denote the standard basis of $\RBB^d$ and use $\Delta$ to denote the Laplacian operator on the spatial variable.
We use $\nabla^i, i\geq 2$ to denote higher order gradient.

\section{Methodology}
Recall that on a torus, when a particle leaves the domain on a boundary, it reappears on the other side (see Figure \ref{fig_torus}-(a)). 
Therefore, the velocity field of the particles are discontinuous on the boundaries, which introduces difficulties in function approximation.
To avoid this issue, a useful and equivalent perspective of the periodic boundary condition is to think of the density function $\alpha(t, \cdot)$ as a $l$-periodic function in every coordinate, i.e. 
\begin{equation} \label{eqn_periodic_measure}
	\forall t, x,\quad \alpha(t, x + le_i) = \alpha(t, x), i \in [d],
\end{equation}
which is depicted in (b) of Figure \ref{fig_torus}.
While particles are allowed to leave $\XM$, the domain of interest, due to the periodicity of the whole domain $\RBB^d$, the total mass within $\XM$ is conserved since the influx and the outflow are balanced.
\begin{figure}[t]
	\centering
	\begin{tabular}{c c}
		\includegraphics[height=0.2\textwidth]{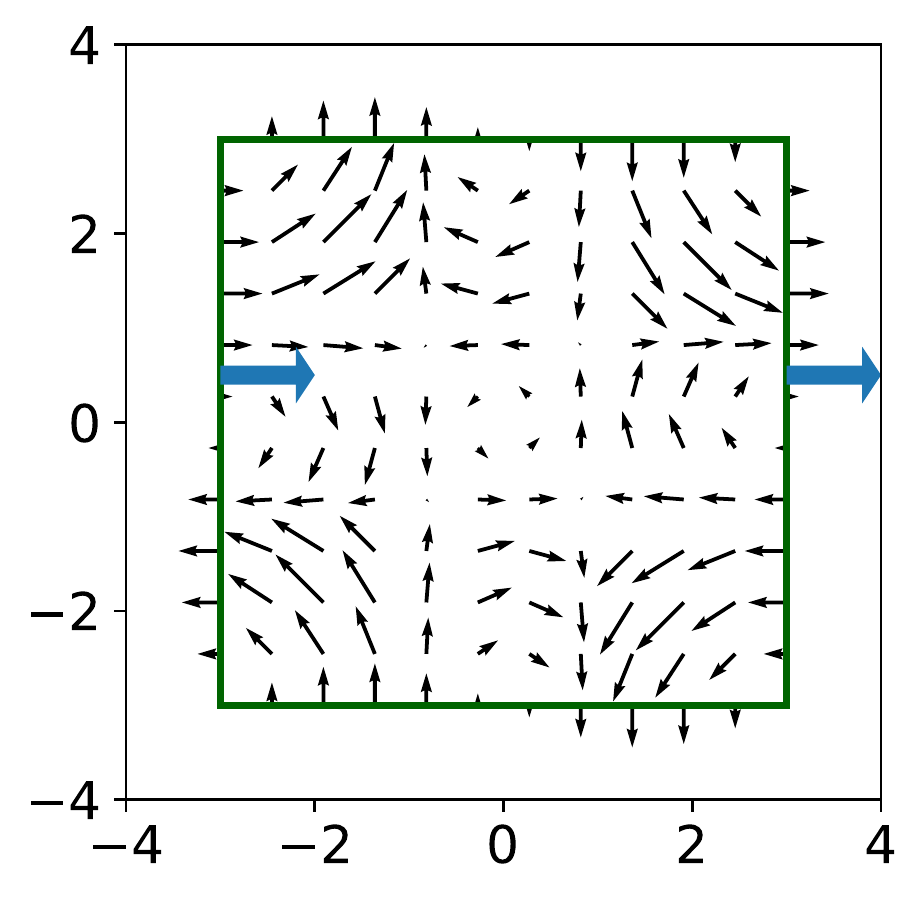} &	
		\includegraphics[height=0.2\textwidth]{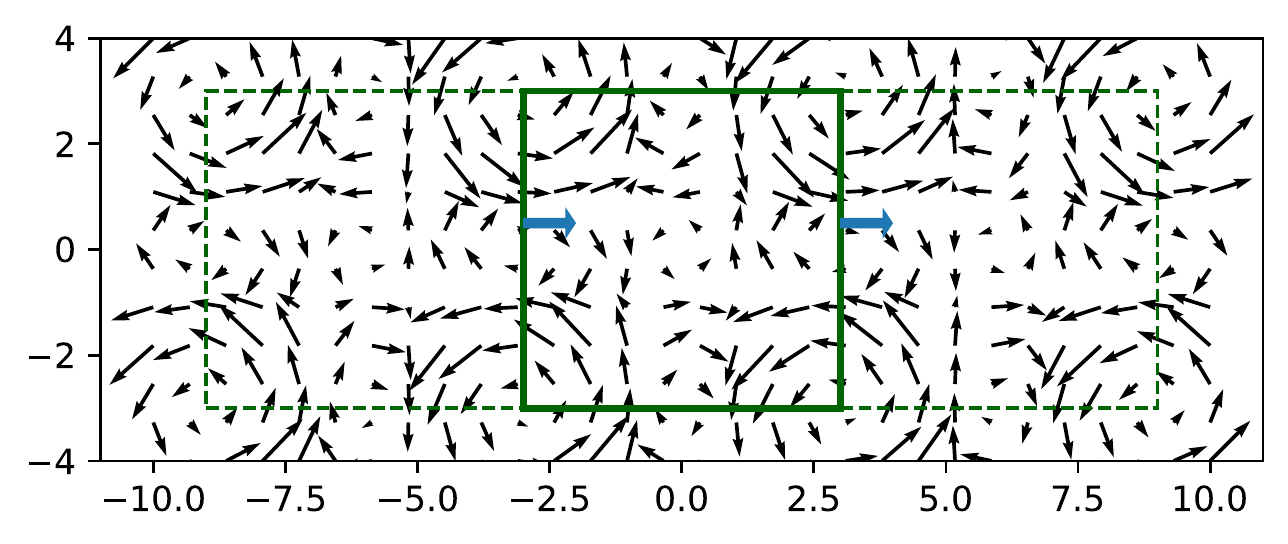} \\
		(a) & (b)
	\end{tabular}
	\caption{Figure (a) depicts that when a particle leaves the torus on a boundary, it reappears on the other side. The velocity field is discontinuous on the boundary. 
		In Figure (b), we consider the periodic extension of the density function $\alpha(t, x)$.
		This is equivalent to the torus since whenever a particle leaves the boundary, another particle will enter $\XM$ from a corresponding adjoining cell.
		Note that in Figure (b) the velocity field is continuous on the whole domain.}
	\label{fig_torus}
\end{figure}
\subsection{Self-consistency of the Fokker-Planck Equation}
Suppose that the particles are distributed initially according to the distribution $\alpha_0$ defined in \eqref{eqn_FPE_initial} and follow a hypothesis velocity field $f(t, x)$. From this perspective, we can write the distribution of particles on $\XM$ at time $t$ in a push-forward manner
\begin{equation} \label{eqn_pushforward_rho1}
	\rho^1(t, \cdot; f) = X(t, \cdot; f) \sharp \alpha_0,
\end{equation}
where the push-forward map $X$, induced by the velocity $f$, is defined in \eqref{eqn_push_forward_map}.
Note that $\rho^1$ is well-defined on the whole domain $\RBB^d$, but we restrict our interest to $\XM$.
Based on this notation, the Fokker-Planck equation \eqref{eqn_FPE} induces a {\it velocity-consistency transformation} $\AM$ of the velocity field in the following manner:
\begin{equation} \label{eqn_transform_A}
	\AM[f](t, x) = -\nabla V(t, x) - \nabla \log \rho^1(t, x; f).
\end{equation}
Observe that, for the ground-truth velocity field $f^*$ that drives the particle evolution of the Fokker-Planck equation, i.e. $f^*(t, x) = - \nabla V(t, x) - \nabla \log \alpha(t, x)$, we have $$\AM[f^*] = f^*.$$ We term this property the \emph{self-consistency} of the Fokker-Planck equation. Similar to Eq.~\eqref{eqn_pushforward_rho1}, we can define $\rho^2(t, \cdot; f) = X(t, \cdot; \AM[f]) \sharp \alpha_0,$. Indeed, the interplay between the two systems $\rho^1$ and $\rho^2$ is crucial to our analysis.

The goal of our paper is to show that if a sequence of hypothesis velocity fields $\{f_n\}$ asymptotically satisfies the above consistency property, i.e. $\|\AM[f_n] - f_n\| \rightarrow 0$ as $n\rightarrow\infty$ for some appropriate norm $\|\cdot\|$, then the distribution $\rho^1(t, x; f_\infty)$ generated from the hypothesis velocity field $f_\infty$ recovers $\alpha(t, x)$, the solution to the FPE \eqref{eqn_FPE} in the Wasserstein-2 sense.

\subsection{Designing the Self-Consistency Potential Function and its Computation}
Given a hypothesis velocity field $f$, we denote the difference between $f$ and $\AM[f]$ by
\begin{equation} \label{eqn_delta_def}
	\delta(t, x; f) = f(t, x) - \AM[f](t, x).
\end{equation}
We propose to use the time average of the $2$nd order Sobolev norm of $\delta$ with the base measure $\rho^{1}(t, \cdot; f)$ as the potential function of $f$:
\begin{align*}
	R(f) = \int_{0}^{T} \int_\XM\sum_{i=0}^2 \|\delta^{(i)}(t, x; f)\|^2 \rho^{1}(t, x; f)\ud x \ud t
	= \int_{0}^{T} \| \delta(t, \cdot; f) \|^2_{W^{2, 2}(\rho^1(t, \cdot; f))} \ud t.
\end{align*}
%where we use $\delta_t$ and $\rho_t^1$ as shorthand of $\delta(t, \cdot)$ and $\rho^1(t, \cdot)$ respectively.
In Section \ref{section_analysis}, we show that $R(f)$ controls the Wasserstein-2 distance between $\rho^1(t, \cdot)$ and $\alpha(t, \cdot)$, i.e. for any time $t\in[0, T]$, $W_2^2(\rho^1(t, \cdot), \alpha(t, \cdot)) = O(R(f))$.
This result has two direct implications: (i) Given a hypothesis velocity field $f$, we can use $R(f)$ to measure its quality in terms of recovering the solution to the FPE; (ii) Given a class of parameterized hypothesis velocity fields $f_\theta$, one can find the best parameter $\theta$ by minimizing $R(f_\theta)$ with a \emph{learning} procedure, which is discussed in details at the end of this section.
By ``learning", we mean to distinguish our approach from the previous numerical FPE solvers, e.g. the JKO method, which are in essence ``simulating" the FPE dynamics: They iteratively update the configuration of the system using certain rules derived from the FPE.
In contrast, the proposed potential function describes the self-inconsistency of a hypothesis velocity field, which can be refined through a training procedure.

%
%
%for a sequence of hypothesis velocity fields $\{f_n\}$, $\rho^1(t, \cdot; f_n)$ converges to $\alpha(t, \cdot)$ in a Wasserstein-2 sense for all $t\in[0, T]$, if $R(f_n) \rightarrow 0$ as $n\rightarrow \infty$.

The potential function $R(f)$ might seem difficult to compute at first. In the following, we present an equivalent formulation of $R(f)$ from the perspective of particle trajectory, which is critical to our analysis and to the actually computation of $R(f)$.
We first introduce the following important change-of-variables formula of integrating periodic functions on $\XM$. 
Recall that the standard change-of-variables formula reads as follows: for a function $g:\RBB^d \rightarrow \RBB$
\begin{equation}
	\int_\XM g \ud X\sharp \alpha = \int_{X^{-1}(\XM)} g\circ X \ud \alpha.
\end{equation}
In brief, we show that for an $l$-periodic functions $g$ the integration domain $X^{-1}(\XM)$ on the RHS of the above equation can be replaced by $\XM$.
The proof is deferred to Appendix \ref{appendix_proof_lemma_change_of_variable}.
\begin{lemma} \label{lemma_change_of_variables}
	Consider an invertible mapping $X: \RBB^d \rightarrow \RBB^d$ such that $X(x) - x$ is $l$-periodic, an $l$-periodic function $g: \RBB^d \rightarrow \RBB$, and an $l$-periodic measure $\alpha$. The following formula holds:
	\begin{equation}
		\int_\XM g \ud X\sharp \alpha = \int_\XM g\circ X \ud \alpha,
	\end{equation}
	where $\XM$ is the centered $d$-dimensional box defined above.
\end{lemma}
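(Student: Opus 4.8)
The plan is to reduce the claim to the ordinary change-of-variables identity stated just above the lemma, namely $\int_\XM g\,\ud X\sharp\alpha = \int_{X^{-1}(\XM)} g\circ X\,\ud\alpha$, and then replace the shifted domain $X^{-1}(\XM)$ by $\XM$ using periodicity. The first ingredient is that $X$ is equivariant under lattice translations: from the hypothesis that $X(x)-x$ is $l$-periodic one gets $X(x+lk) = X(x)+lk$ for every integer vector $k \in \mathbb{Z}^d$, and applying $X^{-1}$ to the identity $X(X^{-1}(y)+lk) = X(X^{-1}(y))+lk = y+lk$ shows the inverse is equivariant as well, $X^{-1}(y+lk) = X^{-1}(y)+lk$. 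In particular $X^{-1}(x)-x$ is $l$-periodic, so $X^{-1}$ is again an admissible map in the sense of the lemma.

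Next I would observe that $g\circ X$ is $l$-periodic, since $g(X(x+lk)) = g(X(x)+lk) = g(X(x))$, so the density $x\mapsto g(X(x))\,\alpha(x)$ is $l$-periodic on $\RBB^d$. The heart of the argument is that $X^{-1}(\XM)$ is a \emph{fundamental domain} for the $l\mathbb{Z}^d$-action: the cells $\{\XM + lk\}_k$ cover $\RBB^d$ with only measure-zero overlaps, and applying the bijection $X^{-1}$ together with its equivariance, $X^{-1}(\XM+lk) = X^{-1}(\XM)+lk$, turns this into the statement that $\{X^{-1}(\XM)+lk\}_k$ also covers $\RBB^d$ with only measure-zero overlaps. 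Then I decompose $X^{-1}(\XM) = \bigcup_k \big(X^{-1}(\XM)\cap(\XM+lk)\big)$, translate the $k$-th piece by $-lk$ into $\XM$, drop the shift inside the integrand by periodicity, and note that the translated pieces $X^{-1}(\XM-lk)\cap\XM$ are pairwise disjoint up to null sets with union $X^{-1}(\RBB^d)\cap\XM = \XM$. Summing over $k$ gives $\int_{X^{-1}(\XM)} g\circ X\,\ud\alpha = \int_\XM g\circ X\,\ud\alpha$, which combined with the ordinary change-of-variables formula yields the claim.

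The only place where the full strength of the hypotheses is used is this tiling claim — that invertibility of $X$, hence equivariance of $X^{-1}$, makes $X^{-1}(\XM)$ a fundamental domain — and I expect that to be the conceptual crux; the remaining manipulations are bookkeeping. The one technical subtlety is justifying the interchange of the countable sum over $k\in\mathbb{Z}^d$ with the integral and handling the boundary overlaps, which follows from Tonelli's theorem since $\alpha$ is a finite measure and $g\circ X$ is bounded on the compact box $\XM$ (or one argues on $|g\circ X|$ by monotone convergence). Conceptually, the whole lemma is nothing but the change-of-variables formula on the quotient torus $\RBB^d/l\mathbb{Z}^d$, to which $X$, $g$, and $\alpha$ all descend; the argument above is just the elementary unpacking of that one-line proof, which I would present in full for completeness and to keep the appendix self-contained.
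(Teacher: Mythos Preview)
Your proposal is correct and follows essentially the same approach as the paper: start from the ordinary change-of-variables identity $\int_\XM g\,\ud X\sharp\alpha=\int_{X^{-1}(\XM)} g\circ X\,\ud\alpha$, then use the lattice-equivariance of $X$ and $X^{-1}$ to argue that $X^{-1}(\XM)$ is a fundamental domain, so that integrating the $l$-periodic density $g\circ X\cdot\alpha$ over $X^{-1}(\XM)$ equals integrating it over $\XM$. The paper packages the fundamental-domain argument via a modulus operator $\Pi:\RBB^d\to\XM$ and shows $\Pi|_{X^{-1}(\XM)}$ is a bijection onto $\XM$, whereas you slice $X^{-1}(\XM)$ by the lattice cells $\XM+lk$ and translate each piece back; these are two equivalent formalizations of the same idea.
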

Note that the push-forward map $X$ defined in \eqref{eqn_push_forward_map} is invertible and $X(t, x; f) - x$ is $l$-periodic (see \eqref{eqn_X_x_l_periodic}), the integrand in $R(f)$ is $l$-periodic, and from \eqref{eqn_periodic_measure} the measure $\alpha_0$ is also $l$-periodic.
Using the above lemma, we have
\begin{align}
	R(f) =&\ \int_{0}^{T} \int_\XM\sum_{i=0}^2 \|\delta^{(i)}(t, \cdot; f)\|^2 \ud X(t, \cdot; f)\sharp \alpha_0 \ud t\\
	=&\ \int_{0}^{T} \int_\XM\sum_{i=0}^2 \|\delta^{(i)}(t, X(t, x; f); f)\|^2 \ud \alpha_0(x) \ud t
\end{align}
If we further define the trajectory-wise loss
\begin{equation}
	R(f; x_0) = \int_{0}^{T} \sum_{i=0}^2 \|\delta^{(i)}\left(t, X(t, x_0; f); f\right)\|^2 \ud t,
\end{equation}
the potential function $R(f)$ admits an equivalent formulation
\begin{equation}
	R(f) = \int_{\XM} R(f; x_0) \alpha_0(x_0) \ud x_0.
\end{equation}
Therefore, we have that $R(f; x_0)$ is an unbiased estimator of the objective $R(f)$.
In the following, we elaborate on how $R(f; x_0)$ can be computed.
%Note that when computing $R(\theta; x_0)$, the particle with the initial position $x_0\in\XM$ may leave the domain $\XM$ following the hypothesis velocity field $f$. 
\paragraph{Computation of the trajectory-wise loss $R(f; x_0)$}
We now discuss how the function $R(f; x_0)$ can be computed.
We assume that we have the exact expression of $f$ and $V$, and hence we can readily evaluate $f^{(i)}$ for $i \in \{0, 1, 2\}$ and $V^{(i)}$ for $i \in \{1, 2, 3\}$ (recall the notation of differentials in \eqref{eqn_differentials}).
Use $x(t) = X(t, x_0; f)$ to denote the trajectory of a particle with the initial position $x_0$ and following the velocity field $f$.
In the following, we address how $\nabla^i \log \rho^{1}(t, x(t); f)$ for $i\in\{1, 2, 3\}$ can be computed since these are the only unknown terms when evaluating $\AM[f]^{(i)}(t, x(t))$ for $i \in \{0, 1, 2\}$.
The proofs of the following propositions are deferred to the appendix. 
We first compute the first order gradient of the log-probability.
\begin{proposition} \label{prop_score}
	Denote $f_{t}(x) = f(t, x)$ and $\rho^{1}_{t} = \rho^{1}(t, x; f)$ where we recall that $\rho^{1}(t, x; f)$ is the density function formally defined in equation \eqref{eqn_pushforward_rho1}. We have
	\begin{equation*}
		\fracd{ }{t}\nabla \log \rho_{t}^{1}(x(t)) = - \nabla  \udiv\left(f_{t}(x(t))\right) -   \left(\nabla{f_{t}}(x(t))\right)^\top \nabla \log \rho_{t}^{1}(x(t)),
	\end{equation*}
% 	where $\JM_f$ denotes the Jacobian matrix of a vector valued function $f$.
	%	\red{do we need to discuss the existence of $\nabla \log \rho^{(1)}_{t, \theta}(x(t))$?}
\end{proposition}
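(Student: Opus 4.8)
The plan is to obtain the stated ODE by differentiating, in the spatial variable, the continuity equation satisfied by $\rho^1$, and then reading off the material derivative along a characteristic curve.

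First I would record that $\rho^1_t := \rho^1(t,\cdot;f) = X(t,\cdot;f)\sharp\alpha_0$ is, by the classical method of characteristics, the solution of the continuity equation
\[
	\partial_t \rho^1 + \udiv\big(\rho^1 f\big) = 0, \qquad \rho^1(0,\cdot) = \alpha_0 ,
\]
i.e.\ the transport equation whose velocity field is exactly $f$ (see \eqref{eqn_velocity_field}--\eqref{eqn_pushforward_rho1}). Since $f$ is Lipschitz in $x$, the flow $X(t,\cdot;f)$ is a diffeomorphism of $\XM$ (viewed on the torus), so if $\alpha_0$ is bounded below by a positive constant and $f$ is $C^2$, then $\rho^1_t$ is everywhere positive and is $C^1$ in $t$ and $C^2$ in $x$; hence $\log\rho^1$ is well defined and may be differentiated twice in $x$ and once in $t$. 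Expanding $\udiv(\rho^1 f) = \nabla\rho^1\cdot f + \rho^1\,\udiv f$ and dividing the continuity equation by $\rho^1$ gives the scalar identity $\partial_t \log\rho^1 + f\cdot\nabla\log\rho^1 + \udiv f = 0$.

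Next I would apply $\nabla$ to this scalar identity. By the product rule, $\nabla\big(f\cdot\nabla\log\rho^1\big) = \big(\nabla^2\log\rho^1\big)f + \big(\nabla f\big)^\top\nabla\log\rho^1$, and using that the Hessian $\nabla^2\log\rho^1$ is symmetric the first term equals the directional derivative $(f\cdot\nabla)\nabla\log\rho^1$. This produces
\[
	\partial_t\nabla\log\rho^1 + (f\cdot\nabla)\nabla\log\rho^1 + \big(\nabla f\big)^\top\nabla\log\rho^1 + \nabla\,\udiv f = 0 .
\]
Finally, along the characteristic $x(t) = X(t,x_0;f)$ one has $\fracd{}{t}x(t) = f_t(x(t))$, so the chain rule gives
\[
	\fracd{}{t}\nabla\log\rho^1_t(x(t)) = \big(\partial_t\nabla\log\rho^1\big)(t,x(t)) + \big(f_t(x(t))\cdot\nabla\big)\nabla\log\rho^1(t,x(t)) ,
\]
which is precisely the sum of the first two terms of the previous display, evaluated at $(t,x(t))$. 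Substituting yields $\fracd{}{t}\nabla\log\rho^1_t(x(t)) = -\nabla\,\udiv\!\big(f_t(x(t))\big) - \big(\nabla f_t(x(t))\big)^\top\nabla\log\rho^1_t(x(t))$, as claimed.

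The only genuinely delicate point is the regularity bookkeeping: one must fix standing assumptions on $f$ (Lipschitz for existence/uniqueness of the flow, $C^2$ in order to differentiate $\udiv f$) and on $\alpha_0$ (positivity and enough smoothness) so that $\rho^1$ lies in $C^1_t C^2_x$, stays strictly positive, and all of the interchanges of $\partial_t$, $\nabla$ and division by $\rho^1$ above are legitimate; with those in hand the computation reduces to the product and chain rules, plus symmetry of the Hessian. The periodic/torus structure introduces no extra difficulty, since every step is local and the $l$-periodicity of $X(t,\cdot;f)-\mathrm{id}$ and of $\alpha_0$ (see \eqref{eqn_X_x_l_periodic} and \eqref{eqn_periodic_measure}) is inherited by $\rho^1$.
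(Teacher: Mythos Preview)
Your proposal is correct and follows essentially the same route as the paper: both arguments pass from the continuity equation for $\rho^1$ to $\partial_t\log\rho^1 = -\udiv f - f\cdot\nabla\log\rho^1$, apply $\nabla$, use the product rule $\nabla(f\cdot\nabla\log\rho^1)=\nabla^2\log\rho^1\,f+(\nabla f)^\top\nabla\log\rho^1$, and observe that the Hessian term cancels against the convective part of the material derivative along $x(t)$. The only difference is cosmetic ordering (you differentiate the PDE first and then invoke the chain rule, while the paper writes out the chain rule first and then substitutes), and you include a regularity discussion that the paper leaves implicit.
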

The second order partial derivatives of the log-probability is computed as follows.  
\begin{proposition} \label{prop_Hessian_log}
	Denote $f_{t}(x) = f(t, x)$ and $\rho_{t}^{1} = \rho^{1}(t, x; f)$ where we recall that $\rho^{1}(t, x; f)$ is formally defined in equation \eqref{eqn_pushforward_rho1}.
	The time evolution of the 2nd order gradient of the log probability function can be computed by 
	\begin{align*}
		\fracd{ }{ t}\frac{\partial^2}{\partial x_i \partial x_j} \log \rho_{t}^{1}(x(t)) = -\frac{\partial^2}{\partial x_i \partial x_j} \udiv f_{t}(x_t)  - \fracpartial{}{x_i}\nabla \log\rho_{t}^{1}(x(t))\cdot \fracpartial{}{x_j} f_{t}(x(t)) \qquad \notag \\
		-  \fracpartial{}{x_i} f_{t}(x(t)) \cdot \fracpartial{}{x_j}\nabla \log\rho_{t}^{1}(x(t)) -  \frac{\partial^2}{\partial x_i \partial x_j} f_{t}(x(t)) \cdot \nabla\log\rho_{t}^{1}(x(t)).
	\end{align*}
\end{proposition}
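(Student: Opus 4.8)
The plan is to obtain the identity by differentiating the continuity equation satisfied by $\rho^1$ twice in space and then passing to the derivative along a particle trajectory; this is the second‑order analogue of the way Proposition~\ref{prop_score} is extracted from the same equation. By construction $\rho^1(t,\cdot;f) = X(t,\cdot;f)\sharp\alpha_0$ is the push‑forward of $\alpha_0$ under the flow $\fracd{}{t}x(t) = f(t,x(t))$. Assuming $f$ is $l$‑periodic and smooth enough that $X(t,\cdot;f)$ is a $C^3$ diffeomorphism of the torus and that $\alpha_0$ is a strictly positive $l$‑periodic $C^3$ density, Lemma~\ref{lemma_change_of_variables} guarantees that $\rho^1_t$ has unit mass on $\XM$ and is a strictly positive $l$‑periodic $C^3$ function, so that $s_t := \log\rho^1_t$ is well defined and of class $C^3$ in $x$ and $C^1$ in $t$. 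Standard transport theory then gives the classical (pointwise) continuity equation $\fracpartial{}{t}\rho^1_t + \udiv(\rho^1_t f_t) = 0$, which after dividing by $\rho^1_t$ and expanding the divergence becomes the field identity
\begin{equation}\label{eqn_star_proposal}
	\fracpartial{}{t} s_t(x) + \nabla s_t(x)\cdot f_t(x) = -\udiv f_t(x).
\end{equation}
(Applying $\nabla$ to \eqref{eqn_star_proposal} and adding the convective term $\nabla^2 s_t\, f_t$ recovers Proposition~\ref{prop_score}; here one more spatial derivative is taken.)

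Next, apply $\frac{\partial^2}{\partial x_i\partial x_j}$ to \eqref{eqn_star_proposal}. Using equality of mixed partials on the left and the product rule twice on $\nabla s_t\cdot f_t$,
\begin{align*}
	\fracpartial{}{t}\frac{\partial^2 s_t}{\partial x_i\partial x_j}
	&= -\frac{\partial^2}{\partial x_i\partial x_j}\udiv f_t
	- \Big(\frac{\partial^2}{\partial x_i\partial x_j}\nabla s_t\Big)\cdot f_t \\
	&\quad - \frac{\partial \nabla s_t}{\partial x_i}\cdot\frac{\partial f_t}{\partial x_j}
	- \frac{\partial \nabla s_t}{\partial x_j}\cdot\frac{\partial f_t}{\partial x_i}
	- \nabla s_t\cdot\frac{\partial^2 f_t}{\partial x_i\partial x_j}.
\end{align*}
Finally, evaluate along $x(t) = X(t,x_0;f)$ and use the chain rule together with the fact that $\nabla\big(\frac{\partial^2 s_t}{\partial x_i\partial x_j}\big) = \frac{\partial^2}{\partial x_i\partial x_j}\nabla s_t$:
\begin{align*}
	\fracd{}{t}\frac{\partial^2 s_t}{\partial x_i\partial x_j}(x(t))
	&= \Big(\fracpartial{}{t}\frac{\partial^2 s_t}{\partial x_i\partial x_j}\Big)(x(t)) \\
	&\quad + \Big(\frac{\partial^2}{\partial x_i\partial x_j}\nabla s_t\Big)(x(t))\cdot f_t(x(t)).
\end{align*}
The convective term here exactly cancels the $-\big(\frac{\partial^2}{\partial x_i\partial x_j}\nabla s_t\big)\cdot f_t$ term from the previous display, leaving precisely the four terms claimed in the statement (with $s_t = \log\rho^1_t$).

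The only step that is not routine bookkeeping is the first one: showing that $\rho^1_t$ is a genuine classical solution of the continuity equation, with enough spatial regularity that \eqref{eqn_star_proposal} may be differentiated twice, and that $\rho^1_t$ stays strictly positive so that $\log\rho^1_t$ is defined. This is where the torus/periodicity hypotheses and the smoothness assumptions on $f$ and $\alpha_0$ are used: Lemma~\ref{lemma_change_of_variables} for mass conservation on $\XM$ and for positivity of $\rho^1_t$, and the usual flow estimates for the $C^3$ regularity of $X(t,\cdot;f)$ and hence of $\rho^1_t$. An essentially equivalent route is to differentiate the identity of Proposition~\ref{prop_score} in $x_j$ directly, tracking the commutator $\big[\frac{\partial}{\partial x_j},\,\fracpartial{}{t}+f_t\cdot\nabla\big] = \big(\frac{\partial f_t}{\partial x_j}\big)\cdot\nabla$; this produces the same four terms, but routing through \eqref{eqn_star_proposal} is cleaner because it avoids the commutator accounting.
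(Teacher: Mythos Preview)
Your proposal is correct and follows essentially the same route as the paper: derive the identity $\fracpartial{}{t}\log\rho^1_t = -\udiv f_t - \nabla\log\rho^1_t\cdot f_t$ from the continuity equation, apply $\partial_{i,j}$ via the product rule, evaluate along the trajectory $x(t)$ by the chain rule, and observe that the third-order convective term cancels. The paper's proof is terser and omits the regularity discussion you include, but the computation is identical.
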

The third order partial derivatives of the log-probability is computed as follows. 
\begin{proposition} \label{prop_3rd_gradient_log}
	Denote $f_{t}(x) = f(t, x)$ and $\rho_{t}^{1} = \rho^{1}(t, x; f)$ where we recall that $\rho^{1}(t, x; f)$ is formally defined in equation \eqref{eqn_pushforward_rho1}.
	The time evolution of the 3rd order gradient of the log probability function can be computed by 
	\begin{align*}
		 \fracd{ }{ t} \frac{\partial^3}{\partial x_i \partial x_j \partial x_k}& \log \rho_{t}^{1}(x(t)) = - \frac{\partial^3}{\partial x_i \partial x_j \partial x_k} \udiv f_t(x(t)) - \frac{\partial^2}{\partial x_i \partial x_j} \nabla \log \rho_{t}^1(x(t)) \cdot \frac{\partial}{\partial x_k} f_{t}(x(t)) \\
		&\quad \quad-\frac{\partial^2}{\partial x_i \partial x_k}\nabla \log \rho_{t}^1(x(t)) \cdot \partial_{j} f_{t}(x(t)) - \frac{\partial}{\partial x_i} \nabla \log \rho_{t}^1(x(t)) \cdot \partial_{j, k} f_{t}(x(t)) \\
		&\quad \quad-\frac{\partial^2}{\partial x_j \partial x_j}\nabla \log \rho_{t}^1(x(t)) \cdot \partial_{i} f_{t}(x(t)) - \frac{\partial}{ \partial x_j} \nabla \log \rho_{t}^1(x(t)) \cdot \partial_{i, k} f_{t}(x(t)) \\
		&\quad \quad-\frac{\partial}{\partial x_k}\nabla \log \rho_{t}^1(x(t)) \cdot \partial_{i,j} f_{t}(x(t)) - \nabla \log \rho_{t}^1(x(t)) \cdot \frac{\partial^3}{\partial x_i \partial x_j \partial x_k} f_{t}(x(t)).
	\end{align*}
\end{proposition}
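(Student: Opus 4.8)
The plan is to pass between the Eulerian (PDE) and Lagrangian (trajectory) descriptions of $\rho^1$. First I would note that, since $\rho^1(t,\cdot;f)=X(t,\cdot;f)\sharp\alpha_0$ is the push-forward of $\alpha_0$ along the flow generated by $f$ (recall \eqref{eqn_pushforward_rho1}), differentiating $t\mapsto\int_{\XM}\varphi\,\ud\rho^1(t,\cdot;f)$ for a smooth $l$-periodic test function $\varphi$ and using $\partial_t X(t,x_0;f)=f(t,X(t,x_0;f))$ shows that $\rho^1$ solves the continuity equation $\partial_t\rho^1+\udiv(\rho^1 f)=0$ on $[0,T]\times\RBB^d$. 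Under the standing regularity assumptions (so that the flow is a smooth diffeomorphism and $\rho^1$ stays strictly positive), dividing by $\rho^1$ yields the transport equation for $g:=\log\rho^1$,
\[
  \partial_t g+f\cdot\nabla g=-\udiv(f).
\]
The bridge to trajectories is the chain rule: for any smooth $\phi(t,x)$ and the characteristic $x(t)=X(t,x_0;f)$ one has $\frac{\ud}{\ud t}\phi(t,x(t))=(\partial_t\phi+f\cdot\nabla\phi)(t,x(t))$, because $\dot x(t)=f(t,x(t))$. Applying this with $\phi=g$ already recovers the instantaneous change-of-variables identity $\frac{\ud}{\ud t}g(t,x(t))=-\udiv(f(t,x(t)))$ that underlies all three propositions.

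Next I would differentiate the transport equation three times in space. Applying $\partial_{x_i}\partial_{x_j}\partial_{x_k}$ (legitimate by the assumed smoothness, which also permits interchanging $\partial_t$ with spatial derivatives) and expanding $\partial_{x_i}\partial_{x_j}\partial_{x_k}(f\cdot\nabla g)=\sum_m\partial_{x_i}\partial_{x_j}\partial_{x_k}(f_m\partial_{x_m}g)$ by the Leibniz rule over the three operators $\partial_{x_i},\partial_{x_j},\partial_{x_k}$ produces $2^3=8$ terms. The unique term in which all three derivatives land on $\nabla g$, namely $\sum_m f_m\partial_{x_m}(\partial_{x_i}\partial_{x_j}\partial_{x_k}g)=f\cdot\nabla(\partial_{x_i}\partial_{x_j}\partial_{x_k}g)$, combines with $\partial_t\partial_{x_i}\partial_{x_j}\partial_{x_k}g$ into the material derivative $(\partial_t+f\cdot\nabla)(\partial_{x_i}\partial_{x_j}\partial_{x_k}g)$; the remaining seven contract a first-, second-, or third-order derivative of $f$ against the complementary-order derivative of $\nabla g$ (three with $\partial f$ against a third-order derivative of $g$, three with $\partial^2 f$ against a second-order derivative of $g$, one with $\partial^3 f$ against $\nabla g$), while the right-hand side contributes $-\partial_{x_i}\partial_{x_j}\partial_{x_k}\udiv(f)$. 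Evaluating the resulting pointwise identity along $x(t)$ through the chain-rule bridge converts the material derivative into $\frac{\ud}{\ud t}\partial_{x_i}\partial_{x_j}\partial_{x_k}g(t,x(t))$, and rewriting each contraction in the paper's vector notation — e.g.\ $\sum_m(\partial_{x_k}f_m)(\partial_{x_i}\partial_{x_j}\partial_{x_m}g)=\frac{\partial^2}{\partial x_i\partial x_j}\nabla\log\rho^1_t\cdot\frac{\partial}{\partial x_k}f_t$ evaluated at $x(t)$, and analogously for the other index patterns — gives exactly the stated formula. The same derivation truncated after one or two spatial derivatives reproduces Propositions~\ref{prop_score} and~\ref{prop_Hessian_log}; alternatively one may proceed inductively, differentiating the Eulerian form of the preceding proposition once more and restricting again to $x(t)$.

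The conceptual content is confined to the first step; for this proposition the real work is the bookkeeping of the Leibniz expansion and the contraction conventions, so the main obstacle is organizational care rather than genuine difficulty. The only analytic points I would state explicitly are that $X(t,\cdot;f)$ is a diffeomorphism and that $\rho^1$ is strictly positive and sufficiently smooth in $x$, which follow from the standing assumptions on $f$ and $\alpha_0$ and ensure that $\log\rho^1$ and all derivatives appearing in the statement are well defined and that every differentiation above is valid.
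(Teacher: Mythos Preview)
Your proposal is correct and follows essentially the same approach as the paper: derive the transport equation $\partial_t\log\rho^1_t=-\udiv f_t-\nabla\log\rho^1_t\cdot f_t$ from the continuity equation, apply three spatial derivatives with the Leibniz expansion, and observe that the term $f\cdot\nabla(\partial_{i,j,k}\log\rho^1_t)$ combines with the chain-rule contribution to yield the total time derivative along $x(t)$. The only cosmetic difference is the order of operations---the paper first applies the chain rule and then substitutes the PDE, whereas you first differentiate the PDE and then restrict to the trajectory---but the cancellation and the resulting eight-term bookkeeping are identical.
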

The above propositions show that the evolution of the $i$th order differential of $\log \rho^1_t$ only depends differentials with order no more than $i$.
This means that the differentials of $\log \rho^1_t$ can be exactly computed using only local information, even though they depend on the macroscopic distribution. Note that this is only possible along $\{x(t)\}$, the trajectory of the particle under consideration.

\paragraph{Parameterizing the Hypothesis Velocity Field with NODE}
In the following, we take the NODE as a specific parameterized instance of the hypothesis velocity field $f_\theta$.
%To emphasize the dependence of the potential function $R(f_\theta)$ on $\theta$, the parameter of the NODE, we write $R(\theta)$ and $R(\theta; x_0)$ instead of $R(f)$ and $R(f; x_0)$ respectively in this subsection.
Recall that $R(f_\theta; x_0)$ is an unbiased estimator of $R(f_\theta)$.
A key step in the optimization of a neural network is to compute the stochastic gradient $\nabla_\theta R(f_\theta; x_0)$, which is elaborated as follows.

%\paragraph{Computation of the trajectory-wise gradient $\nabla_\theta R(\theta; x_0)$}
Suppose that the initial point $x_0$ is fixed.
To compute $\nabla_\theta R(f_\theta; x_0)$, the gradient of the trajectory-wise loss with respect to the parameter $\theta$, we write $R(f_\theta; x_0)$ in a standard ODE-constrained form:
\begin{equation} \label{eqn_trajectory_wise_inconsistency_ODE_loss}
	R(f_\theta; x_0) = \ell(\theta)\ \defi \int_0^T g(t, s(t), \theta) \ud t
\end{equation}
where $\{s(t)\}_{t\in[0, T]}$ is the solution to the ODE
\begin{equation} \label{eqn_trajectory_wise_inconsistency_ODE}
	\begin{cases}
		\frac{\ud }{\ud t} s(t) = \psi(t, s(t); \theta) \\
		s(0) = s_0(x_0).
	\end{cases}
\end{equation}
Recall the definition of the differentials $f^{(i)}$ in \eqref{eqn_differentials}.
Here, the time-varying state $s(t)$ is 
\begin{equation}
	s(t) = [x(t), \zeta_1(t), \zeta_2(t), \zeta_3(t)],
\end{equation}
where $\zeta_i(t) = (\log \rho^1)^{(i+1)}(t, x(t); f_\theta)$ for $i \in \{0, 1, 2\};$ 
$s_0$ is a function of $x_0$
\begin{equation}
	s_0(x_0) = [x_0,  (\log \alpha_0)^{(1)}(x_0), (\log \alpha_0)^{(2)}(x_0), (\log \alpha_0)^{(3)}(x_0)];
\end{equation}
Here, $\psi$ is the velocity field that drives the evolution of the state $s$ such that the first component of $s$ is updated according to the ODEs in equation \eqref{eqn_NODE} and the last three components of $s$ are updated according to propositions \ref{prop_score} to \ref{prop_3rd_gradient_log} respectively; 
and the function $g$ is define as
\begin{align*}
	g(t, s(t); \theta) = \sum_{i=0}^2\|f^{(i)}_\theta(t, x(t)) + V^{(i+1)} (t, x(t))+ \zeta_i\|^2,
\end{align*}
so that we recover the difference function $\delta$ defined in \eqref{eqn_delta_def}.
Note that by introducing the auxiliary states $\zeta_i$, the function $g$  depends on $\theta$ only through $f^{(i)}_\theta(t, x(t))$.
With the above standard ODE-constrained form of $R(f_\theta; x_0)$, we can compute $\nabla \ell(\theta)$ in equation \eqref{eqn_trajectory_wise_inconsistency_ODE_loss} using the classic adjoint method, which is provided in Appendix \ref{appendix_adjoint_method}.

\paragraph{Recovering an Approximate Solution to the FPE}
Given a hypothesis velocity field $f$, we return $\rho(t, \cdot; f)$ as an approximate solution to the FPE.
To evaluate $\rho(t, x; f)$ for any $x\in\XM$, let ${x(s)}_{s \in [0, t]}$ be the trajectory of the final value problem
\begin{equation}
	\frac{\ud x(s)}{\ud s} = f(s, x(s)), x(t) = x.
\end{equation}
We can compute that $\fracd{ }{ t} \log \rho^{1}(t, x(t); f) 	=  \fracpartial{}{t} \log \rho^{1}(t, x(t); f) + f(t, x(t))\cdot \nabla \log \rho^{1}(t, x(t); f)$.
Using the FPE \eqref{eqn_FPE}, we derive
$\fracpartial{}{t} \log \rho^{1}(t, x) = -\udiv f(t, x) - \nabla \log \rho^1(t, x; f) \cdot f(t, x)$,
and hence we have $\fracd{ }{ t} \log \rho^{1}(t, x(t); f) = -\udiv f(t, x(t))$.
Therefore, we can compute $\log \rho^1(t, x; f)$ by
\begin{equation} \label{eqn_recover_density_from_velocity}
    \log \rho^1(t, x; f) = \log \alpha_0(x(0)) - \int_0^t \udiv f(t, x(s)) \ud s.
\end{equation}
\section{Analysis} \label{section_analysis}
In this section, we prove that the potential function $R(f)$ inspired by the self-consistency of the FPE controls the Wasserstein-2 distance between $\rho^1(t, \cdot; f)$ and $\alpha(t, \cdot)$ for all $t\in[0, T]$.
We achieve this by introducing an auxiliary distribution $\rho^2$ induced by $\AM[f]$ to bridge the hypothesis distribution $\rho^1(t, \cdot; f)$ induced by the velocity field and the solution to the FPE $\alpha(t, \cdot)$.
This allows us to control the Wasserstein-2 distance between $\rho^1(t, \cdot; f)$ and $\rho^2(t, \cdot; f)$ and the KL-divergence between $\rho^2(t, \cdot; f)$ and $\alpha(t, \cdot)$ separately.
We first present the assumptions required for our analysis.
\begin{assumption}[Regularity of the initial distribution] \label{ass_regularity_0}
	For any $x\in\XM$, the Hessian of the {log probability} of the initial distribution $\rho^{1}_0 = \alpha_0$ is bounded, i.e.
	\begin{equation}
		\max\{\|\nabla \log \alpha_0(x)\|, \|\nabla^2 \log \alpha_0(x)\|, \|\nabla^3 \log \alpha_0(x)\|, \|\nabla^2 \Delta \log \alpha\|\}\leq L_0.
	\end{equation}
%	We also assume that $\alpha_0$ is fully supported on $\XM$, i.e. $$\forall x\in\XM, \alpha_0(x) >0.$$
\end{assumption}
%We assume that the velocity field we learned is ``nice" in the following sense to ensure that regularity of $\rho_{t}^{1}$ is preserved over time.  
%The following assumptions  are necessary to pass the initially good regularity to any finite time $t$.
%We first state the necessary regularity assumptions to obtain the result.
\begin{assumption}[Regularity of the hypothesis velocity field] \label{ass_regularity_of_f}
	The hypothesis velocity field $f$ is $l$-periodic for any time $t$ and parameter $\theta$.
	Moreover, given a fixed time horizon $T>0$ of the evolution, for any space-time variables $x\in\XM$ and $t\in[0, T]$ and any neural network parameters $\theta\in\Theta$, 
 	the hypothesis velocity field $f$ in NODE satisfies that for all $x\in\XM$
	\begin{equation}
		\max \{\|\max_{i\in\{1, 2, 3, 4\}} \nabla^i f_{t}(x)\|, \max_{i\in\{1, 2, 3, 4\}}\|\nabla^i \udiv f_{t}(x)\|\} \leq L_f.
	\end{equation}
\end{assumption}
%To ensure that the velocity field $v(\cdot; \rho_{t}^{1})$ induced by the hypothesis distribution $\rho_{t}^{1}$ is Lipschitz continuous, we need the following assumptions on the drifting function $V$ and interaction function $W$.
\begin{assumption}[Regularity of the drifting term]\label{ass_regularity_of_v}
	For all $t$, the potential function $V(t, \cdot)$ is $l$-periodic and for all $x\in\XM$
		$\max\{\|\nabla^2 V(t, x)\|, \|\nabla^3 V(t, x)\|, \|\nabla^2 \Delta V(x)\|\}\leq L_v.$
\end{assumption}
We state our main result as follows.
\begin{theorem}[main result] \label{thm_main}
	Suppose that the assumptions \ref{ass_regularity_0} to \ref{ass_regularity_of_v} hold. 
	We have for all $t\in[0, T]$
	\begin{equation}
		W_2^2(\rho^1(t, \cdot; f), \alpha(t,\cdot)) \leq d\,l \,c \cdot R(f),
	\end{equation}
	where $l$ is length of the box $\XM$, $d$ is the dimension of the ambient space, and $c$ is a constant that depends on the regularity constants $L_0$, $L_f$, $L_v$ and the maximum evolving time $T$.
\end{theorem}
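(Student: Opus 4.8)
The plan is to interpose the auxiliary curve $\rho^2(t,\cdot;f)=X(t,\cdot;\AM[f])\sharp\alpha_0$ between $\rho^1(t,\cdot;f)$ and $\alpha(t,\cdot)$ and use the triangle inequality $W_2(\rho^1_t,\alpha_t)\le W_2(\rho^1_t,\rho^2_t)+W_2(\rho^2_t,\alpha_t)$ (abbreviating $\rho^i_t=\rho^i(t,\cdot;f)$, $\alpha_t=\alpha(t,\cdot)$), bounding the first term by an ODE--coupling argument and the second by a relative--entropy argument, each contributing a factor $R(f)$. All of this is carried out on the torus $\XM$, where Lemma~\ref{lemma_change_of_variables} lets us rewrite any integral of a periodic integrand against $\rho^i_t$ as an average over the corresponding flow started from $\alpha_0$. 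The preliminary step is a battery of a priori bounds: running \gronwall's inequality on the trajectory ODEs of Propositions~\ref{prop_score}--\ref{prop_3rd_gradient_log} --- and using that the induced flows are diffeomorphisms of $\XM$, so a bound along every trajectory is a uniform bound --- gives $\|\nabla^i\log\rho^1_t\|\le C_i$ for $i=1,2,3$ on $[0,T]\times\XM$, and then the same argument applied to the velocity field $\AM[f]=-\nabla V-\nabla\log\rho^1$ (which is Lipschitz by the bound on $\nabla^2\log\rho^1$, so its flow is again a diffeomorphism of $\XM$) gives $\|\nabla\log\rho^2_t\|\le C_1'$; the higher-order parts of Assumptions~\ref{ass_regularity_0}--\ref{ass_regularity_of_v} are consumed exactly here. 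Bounded scores together with unit mass on the compact $\XM$ also yield two-sided bounds $c_{\min}\le\rho^1_t,\rho^2_t,\alpha_t\le c_{\max}$, hence a density comparison $\rho^2_t\le C_0\,\rho^1_t$, and, with standard parabolic regularity for the FPE, a uniform transportation inequality $W_2^2(\mu,\alpha_t)\le 2C_{\mathrm T}\,\mathrm{KL}(\mu\,\|\,\alpha_t)$ for $t\in[0,T]$.

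For the first term I would couple the two flows synchronously: draw $Y_0\sim\alpha_0$ and run $x(t)=X(t,Y_0;f)$ and $z(t)=X(t,Y_0;\AM[f])$. Then $\fracd{}{t}\tfrac12\|x-z\|^2=\langle x-z,\,f(t,x)-f(t,z)\rangle+\langle x-z,\,\delta(t,z;f)\rangle\le L_f\|x-z\|^2+\|x-z\|\,\|\delta(t,z;f)\|$, so \gronwall's inequality gives $\|x(t)-z(t)\|\le e^{L_fT}\int_0^t\|\delta(s,z(s);f)\|\,\ud s$; squaring, using Cauchy--Schwarz in time, averaging over $Y_0$, rewriting the average as an integral against $\rho^2_s$ and comparing to $\rho^1_s$ yields $W_2^2(\rho^1_t,\rho^2_t)\le e^{2L_fT}\,T\,C_0\,R(f)$, which uses only the zeroth-order part of $R$.

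For the second term the central object is the score gap along the $\rho^2$-flow: with $Z_t=X(t,Z_0;\AM[f])$, $Z_0\sim\alpha_0$, set $w(t)=\nabla\log\rho^1(t,Z_t;f)-\nabla\log\rho^2(t,Z_t;f)$, so $w(0)=0$. Writing $\AM[f]=f-\delta$ and combining Proposition~\ref{prop_score} for $\rho^1$ (to express $\partial_t\nabla\log\rho^1$) with Proposition~\ref{prop_score} for $\rho^2$ (whose generating velocity is $\AM[f]$), the terms built from $f$ alone recombine into $-(\nabla f)^\top w$ and one is left with
\[
	\fracd{}{t}w(t)=-(\nabla f)^\top w-\nabla\,\udiv\,\delta-(\nabla\delta)^\top\nabla\log\rho^2-(\nabla^2\log\rho^1)\,\delta,
\]
all quantities on the right evaluated at $(t,Z_t)$. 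The forcing is bounded by a constant depending on $C_1',C_2$ (and $d$) times $\sum_{i=0}^2\|\delta^{(i)}(t,Z_t;f)\|$ --- this is precisely where the second derivatives of $\delta$ enter, which is the reason $R$ measures the $W^{2,2}$ Sobolev norm of $\delta$. \gronwall's inequality (coefficient $L_f$), Cauchy--Schwarz, averaging over $Z_0$, and the density comparison then give $\int_\XM\|\nabla\log\rho^1_t-\nabla\log\rho^2_t\|^2\rho^2_t\,\ud x=\mathbb{E}\|w(t)\|^2\le C_w\,R(f)$. Next, since $\rho^2$ solves the continuity equation with velocity $-\nabla V-\nabla\log\rho^1$ and $\alpha$ solves the FPE, differentiating the relative entropy and integrating by parts (no boundary terms, by periodicity) gives
\begin{align*}
	\fracd{}{t}\mathrm{KL}(\rho^2_t\,\|\,\alpha_t)&=-\int_\XM\rho^2_t\,\Big\|\nabla\log\tfrac{\rho^2_t}{\alpha_t}\Big\|^2\ud x+\int_\XM\rho^2_t\,\nabla\log\tfrac{\rho^2_t}{\alpha_t}\cdot\big(\nabla\log\rho^2_t-\nabla\log\rho^1_t\big)\,\ud x\\
	&\le\tfrac12\int_\XM\rho^2_t\,\|\nabla\log\rho^2_t-\nabla\log\rho^1_t\|^2\,\ud x,
\end{align*}
after splitting the cross term via $\nabla\log\alpha_t-\nabla\log\rho^1_t=-\nabla\log\tfrac{\rho^2_t}{\alpha_t}+(\nabla\log\rho^2_t-\nabla\log\rho^1_t)$ and using Young's inequality to absorb the Fisher-information term. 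The right-hand side equals $\tfrac12\mathbb{E}\|w(t)\|^2\le\tfrac12 C_w R(f)$; since the relative entropy vanishes at $t=0$, integrating gives $\mathrm{KL}(\rho^2_t\,\|\,\alpha_t)\le\tfrac12 T C_w R(f)$, and the transportation inequality turns this into $W_2^2(\rho^2_t,\alpha_t)\le C_{\mathrm T}T C_w R(f)$. Combining through $W_2^2(\rho^1_t,\alpha_t)\le 2W_2^2(\rho^1_t,\rho^2_t)+2W_2^2(\rho^2_t,\alpha_t)$ and tracking how the dimension $d$ and the box length $l$ enter the torus diameter, the comparison constant $C_0$, and the transportation constant $C_{\mathrm T}$ gives the claimed $W_2^2(\rho^1(t,\cdot;f),\alpha(t,\cdot))\le d\,l\,c\,R(f)$.

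The step I expect to be the main obstacle is deriving and closing the equation for $w$: it requires transporting the identities of Propositions~\ref{prop_score}--\ref{prop_3rd_gradient_log} along the ``wrong'' trajectory (the $\rho^2$-flow for $\rho^1$'s score) and keeping exact account of the $\delta$-dependent forcing, which unavoidably contains $\nabla\,\udiv\,\delta$ and $\nabla\delta$ --- identifying which norm of $\delta$ is needed to close the \gronwall\ loop is the whole design rationale for $R$ --- and it leans on the nested a priori bounds of the preliminary step, whose bootstrap is where the higher-order regularity assumptions ($\nabla^4 f$, $\nabla^3\log\alpha_0$, $\nabla^2\Delta V$, $\nabla^2\Delta\log\alpha$) are spent. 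The relative-entropy computation is the other essential ingredient, but once one observes that the cross term sees only $\nabla\log\rho^2_t-\nabla\log\rho^1_t$ it reduces to the classical energy estimate comparing two Fokker--Planck solutions, and the passage from $\mathrm{KL}$ to $W_2$ is routine given the torus transportation inequality.
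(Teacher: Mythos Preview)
Your plan is correct and matches the paper's architecture step for step: interpose $\rho^2$, bound $W_2(\rho^1,\rho^2)$ by synchronous coupling, control the score gap $\|\nabla\log\rho^1-\nabla\log\rho^2\|_{\rho^2}$ by a \gronwall\ argument, feed that into the relative-entropy dissipation to bound $\mathrm{KL}(\rho^2\|\alpha)$, and convert to $W_2$. Two implementation choices differ from the paper and are worth flagging. First, for the score gap you derive a closed ODE for $w(t)$ along the $\rho^2$-flow and then invoke the pointwise density comparison $\rho^2_t\le C_0\rho^1_t$ to turn the $\rho^2$-weighted forcing into $R(f)$; the paper instead evaluates $\nabla\log\rho^1$ along the $\rho^1$-flow $x_t$ and $\nabla\log\rho^2$ along the $\rho^2$-flow $y_t$ (both given exactly by Proposition~\ref{prop_score}), bridging the mismatch via the Lipschitz bound $\|\nabla\log\rho^1_t(y_t)-\nabla\log\rho^1_t(x_t)\|\lesssim\|x_t-y_t\|$ and a four-term telescoping of the Jacobian products, so that every $\delta$-term lands at $x_t$ and no density comparison is needed. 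Second, for the passage $\mathrm{KL}\to W_2$ you invoke a Talagrand inequality for $\alpha_t$ (justified via parabolic regularity and Holley--Stroock to get a uniform log-Sobolev constant), whereas the paper uses only the elementary bounded-diameter estimate $W_2^2\le 2\,d\,l\cdot\mathrm{TV}^2$ on the torus (Theorem~6.15 in \citep{villani2009optimal}) followed by Pinsker, which needs no regularity of $\alpha_t$ and produces the $d\,l$ prefactor in the statement transparently. Your route is arguably cleaner in the score-gap step (one ODE rather than a multi-term telescope) at the cost of the extra density-comparison and Talagrand inputs; the paper's route is more self-contained and makes the dependence on $d$ and $l$ explicit.
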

The following corollary states that if we can optimize over the hypothesis velocity field $f$ such that $R(f)$ diminishes to zero, we can recover the solution to the FPE in the Wasserstein-2 sense.
\begin{corollary}
	Suppose that assumptions \ref{ass_regularity_0} to \ref{ass_regularity_of_v} hold and assume a sequence of hypothesis velocity fields ${f_n}$ satisfies $R(f_n) \rightarrow 0$ as $n\rightarrow \infty$.
	We have $W_2^2(\rho^1(t, \cdot; f_n), \alpha(t,\cdot))\rightarrow 0$ as $n\rightarrow \infty$.
\end{corollary}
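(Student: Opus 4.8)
The plan is to obtain the corollary directly from Theorem \ref{thm_main} by a squeeze argument, with the single point needing care being the uniformity of the constant $c$ across the sequence $\{f_n\}$.

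First I would observe that Assumptions \ref{ass_regularity_0}--\ref{ass_regularity_of_v} are stated uniformly over the whole hypothesis class: the constant $L_0$ controls only the fixed initial distribution $\alpha_0$, the constant $L_v$ controls only the fixed drift $V$, and Assumption \ref{ass_regularity_of_f} is required to hold for \emph{all} parameters $\theta\in\Theta$, hence in particular for every $f_n$ in the sequence, with one and the same $L_f$. Consequently the constant $c = c(L_0, L_f, L_v, T)$ appearing in the conclusion of Theorem \ref{thm_main} does not depend on $n$, and neither do $d$ and $l$.

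Next, for each fixed $t\in[0, T]$ and each $n$, Theorem \ref{thm_main} applied to $f_n$ gives
\begin{equation*}
	0 \leq W_2^2(\rho^1(t, \cdot; f_n), \alpha(t, \cdot)) \leq d\, l\, c\cdot R(f_n).
\end{equation*}
Since $R(f_n)\to 0$ as $n\to\infty$ by hypothesis, the right-hand side tends to $0$; the left-hand side is nonnegative, so the squeeze theorem yields $W_2^2(\rho^1(t, \cdot; f_n), \alpha(t, \cdot))\to 0$. As $t\in[0, T]$ was arbitrary, this holds for every $t$, which is the claim. In fact, because the bound $d\, l\, c\cdot R(f_n)$ does not depend on $t$, the convergence is uniform over $t\in[0, T]$.

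The proof contains no real obstacle beyond Theorem \ref{thm_main} itself; the only subtlety is the uniformity argument in the first step — had the assumptions only furnished per-$f_n$ regularity bounds $L_f^{(n)}$ that were allowed to grow with $n$, the constant $c$ could blow up and the squeeze would break. Under the stated hypotheses this does not happen, so the argument goes through as above.
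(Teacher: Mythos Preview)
Your proposal is correct and follows the same approach as the paper, which treats the corollary as an immediate consequence of Theorem \ref{thm_main} without further argument. Your added remark about the uniformity of the constant $c$ in $n$ (guaranteed by the form of Assumption \ref{ass_regularity_of_f}) is a valid and worthwhile clarification that the paper leaves implicit.
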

\begin{remark}
	Assume that the class of hypothesis velocity fields is the NODE $f_\theta$ (see \eqref{eqn_NODE}). Also, assume  that the underlying velocity field $f^*$ is sufficiently regular such that it can be represented by $f_{\theta^*}$ for some optimal parameter $\theta^*$.
	Then, we can optimize over the parameter $\theta$ and recover the solution to the FPE if $R(f_\theta)$ diminishes to zero during the training phase.
\end{remark}
We now present the proof of Theorem \ref{thm_main} which is built on the interplay between two systems:
The first is described by the hypothesis velocity field $f$:
\begin{equation} \label{eqn_system_i}
	\text{System (1): } \frac{\ud x(t)}{ \ud t} = f(t, x(t));
\end{equation}
and the second is driven by $\AM[f]$ which is defined in \eqref{eqn_transform_A}:
\begin{equation} \label{eqn_system_ii}
	\text{System (2): } \frac{\ud y(t)}{\ud t} = \AM[f](t, y(t)).
\end{equation}
Similar to the push-forward map $X(t, \cdot; f)$ defined in \eqref{eqn_push_forward_map}, $\AM[f]$ also induces a map $X(t, \cdot; \AM[f])$.
To better distinguish these two systems, we denote $Y(t, x; f) = X(t, x; \AM[f])$ and define $\rho^2(t, \cdot; f) = Y(t, \cdot; f)\sharp\alpha_0$ for system (2).
These notations are summarized in Table \ref{table_ss}.

\begin{table}[t]
	\caption{Summary of the notations for systems (1) and (2). Note that $Y(t, \cdot; f) = X(t, \cdot; \AM[f])$.}
	\centering
	\begin{tabular}{c | c | c | c | c}
	\hline
		& velocity field & particle map & particle trajectory & density  \\ \hline
		System (1) & $f(t, x)$ & $X(t, \cdot; f)$ & $\{x(t)\}$ & $\rho^{1}(t, \cdot; f) = X(t, \cdot; f)\sharp \alpha_0$ \\ \hline
		System (2) & $\AM[f](t, x)$ & $Y(t, \cdot; f)$ & $\{y(t)\}$ & $\rho^{2}(t, \cdot; f) = Y(t, \cdot; f)\sharp\alpha_0$
	\end{tabular}
	\label{table_ss}
\end{table}
The following lemma establishes some regularity results of the involved velocity fields.
\begin{lemma} \label{lemma_regulairty}
	Recall Systems (1) and (2) in Table \ref{table_ss}.
	For simplicity of notations, denote their probability density functions by $\rho_{t}^{1}$ and $\rho^{2}_{t}$ respectively.
	Additionally, we denote $f_{t}(x) = f(t, x)$ and $\AM[f]_t(x) = \AM[f](t, x)$.
	We have that for all $t \in [0, T]$
	\begin{enumerate}
		\item Both $\rho_{t}^{1}$ and $\rho_{t}^{2}$ are $l$-periodic.
		\item $\nabla \log \rho_{t}^1$ is bounded and Lipschitz continuous.
		\item Both $\nabla \AM[f]_t$ and $\nabla \udiv \AM[f]_t$ are bounded and Lipschitz continuous.
% 		\item $\nabla \log \rho_{t}^{2}$ is bounded and Lipschitz continuous.
	\end{enumerate}
\end{lemma}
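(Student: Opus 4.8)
The plan is to establish the three claims in order, since each builds on the previous one. For claim (1), the $l$-periodicity of $\rho^1_t$ and $\rho^2_t$ follows from the push-forward structure: by \eqref{eqn_periodic_measure} the initial measure $\alpha_0$ is $l$-periodic, and by \eqref{eqn_X_x_l_periodic} (applied with $f$ and then with $\AM[f]$, once we know $\AM[f]$ is $l$-periodic) the maps $X(t,\cdot;f)$ and $Y(t,\cdot;f)$ satisfy $X(t,x+le_i)-x-le_i = X(t,x)-x$. A change-of-variables computation then shows $\rho^i(t,x+le_i;f) = \rho^i(t,x;f)$. The periodicity of $\AM[f]$ itself comes from its definition \eqref{eqn_transform_A}: $-\nabla V$ is $l$-periodic by Assumption \ref{ass_regularity_of_v}, and $\nabla \log \rho^1$ is $l$-periodic because $\rho^1$ is (the gradient of a periodic function is periodic). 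So claims (1) and the periodicity part of (3) should be handled together, with a short induction/bootstrap: periodicity of $\alpha_0$ gives periodicity of $X(t,\cdot;f)$, hence of $\rho^1_t$, hence of $\AM[f]_t$, hence of $Y(t,\cdot;f)$, hence of $\rho^2_t$.

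For claim (2), the boundedness and Lipschitz continuity of $\nabla \log \rho^1_t$, I would use Propositions \ref{prop_score} and \ref{prop_Hessian_log}, which give ODEs along the particle trajectory $\{x(t)\}$ for $\nabla \log \rho^1_t(x(t))$ and $\nabla^2 \log \rho^1_t(x(t))$. Writing $u(t) = \nabla\log\rho^1_t(x(t))$, Proposition \ref{prop_score} reads $\dot u = -\nabla\udiv f_t(x(t)) - (\nabla f_t(x(t)))^\top u$; using Assumption \ref{ass_regularity_of_f} to bound $\|\nabla f_t\| \le L_f$ and $\|\nabla\udiv f_t\|\le L_f$, a \gronwall\ argument starting from $\|u(0)\| = \|\nabla\log\alpha_0(x_0)\| \le L_0$ (Assumption \ref{ass_regularity_0}) yields $\|u(t)\| \le (L_0 + L_f T)e^{L_f T} =: B_1$ uniformly in $x_0$, hence $\|\nabla\log\rho^1_t\|_\infty \le B_1$. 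The same scheme applied to Proposition \ref{prop_Hessian_log}, now feeding in the just-obtained bound $B_1$ on $\nabla\log\rho^1$ and Assumption \ref{ass_regularity_of_f}'s bounds on $\nabla f,\nabla^2 f,\nabla^2\udiv f$, gives a uniform bound $\|\nabla^2\log\rho^1_t\|_\infty \le B_2$ via \gronwall; boundedness of the Hessian is exactly Lipschitz continuity of $\nabla\log\rho^1_t$.

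For the remaining part of claim (3) — boundedness and Lipschitz continuity of $\nabla\AM[f]_t$ and $\nabla\udiv\AM[f]_t$ — I would differentiate \eqref{eqn_transform_A}: $\nabla\AM[f]_t = -\nabla^2 V_t - \nabla^2\log\rho^1_t$, and $\nabla\udiv\AM[f]_t = -\nabla\Delta V_t - \nabla\Delta\log\rho^1_t$. Boundedness of $\nabla\AM[f]_t$ follows from Assumption \ref{ass_regularity_of_v} ($\|\nabla^2 V\|\le L_v$) and the bound $B_2$ from claim (2). For Lipschitz continuity of $\nabla\AM[f]_t$ I need a bound on $\nabla^3\log\rho^1_t$, which comes from Proposition \ref{prop_3rd_gradient_log} by the same \gronwall\ recipe (now using $\|\nabla^i f\|\le L_f$ for $i\le 3$ and the previously established bounds $B_1,B_2$ on the lower-order log-density derivatives); combined with $\|\nabla^3 V\|\le L_v$ this bounds $\nabla(\nabla\AM[f]_t)$. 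Likewise, boundedness of $\nabla\udiv\AM[f]_t$ needs a bound on $\nabla^2\Delta\log\rho^1_t$, and its Lipschitz continuity needs $\nabla^3\Delta\log\rho^1_t$; these require continuing the hierarchy of trajectory-ODEs one or two orders higher, which is why Assumptions \ref{ass_regularity_0}--\ref{ass_regularity_of_v} include the $\nabla^2\Delta$ terms and Assumption \ref{ass_regularity_of_f} goes up to $\nabla^4 f$. The main obstacle is bookkeeping: each higher-order \gronwall\ estimate must be set up in the right order so that every term on the right-hand side of the evolution ODE is already controlled, and one must check that the chain of differentiated identities (extending Propositions \ref{prop_score}--\ref{prop_3rd_gradient_log} to 4th order along the trajectory) closes with only the regularity quantities named in the assumptions. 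No single step is deep, but the dependency order among the bounds $B_1, B_2, B_3, \dots$ must be respected carefully.
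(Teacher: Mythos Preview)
Your proposal is correct and follows essentially the same route as the paper: the bootstrap periodicity argument for (1), then Propositions \ref{prop_score}--\ref{prop_3rd_gradient_log} combined with \gronwall\ to bound successive derivatives of $\log\rho_t^1$ along trajectories for (2) and (3), with the paper likewise waving its hands at the final $\nabla\udiv\AM[f]_t$ step. One small bookkeeping slip: boundedness of $\nabla\udiv\AM[f]_t=-\nabla\Delta V_t-\nabla\Delta\log\rho^1_t$ only requires a bound on $\nabla\Delta\log\rho^1_t$ (third order, already covered by Proposition \ref{prop_3rd_gradient_log}), and its Lipschitz continuity requires $\nabla^2\Delta\log\rho^1_t$ (fourth order) --- not the one-higher orders you name --- which is exactly why Assumption \ref{ass_regularity_of_f} stops at $\nabla^4 f$ and Assumption \ref{ass_regularity_0} includes $\nabla^2\Delta\log\alpha_0$.
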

The following lemma shows that $R(f)$ controls the Wasserstein-2 distance between $\rho^1(t, \cdot)$ and $\rho^2(t, \cdot)$ for all $t\in[0, T]$.
The full proof is provided in Appendix \ref{appendix_proof_sys_1_2}.
\begin{lemma} \label{thm_sys_1_2}
	Recall Systems (1) and (2) in Table \ref{table_ss}.
	Denote their probability density functions by $\rho_{t}^{1}$ and $\rho_{t}^{2}$ respectively.
	Under Assumptions \ref{ass_regularity_0} to \ref{ass_regularity_of_v}, there exists a constant $C_1$ such that
	\[
	\sup_{t \in [0, T]}	W_2^2(\rho_{t}^{1}, \rho_{t}^{2}) \leq C_1 R(f),
	\]
	where $C_1$ depends on the maximum evolving time $T$ and $L_0$, $L_f$, $L_v$ defined in assumptions \ref{ass_regularity_0} to \ref{ass_regularity_of_v}.
%	Further, suppose that Assumption \ref{ass_feasiblility} holds.
%	Let $\{\theta_s\}_{s = 1,2,\ldots}$ be a sequence of neural network parameters such that $\lim_{s\rightarrow \infty}\theta_s = \theta_*$ with $R(\theta_*) = 0$. We have
%	\begin{equation}
%		\lim_{s\rightarrow \infty} \mathrm{W}^2_2(\rho^{1}_{t, \theta_s}, \rho^{2}_{t, \theta_s}) = 0, \forall t\in[0, T]
%	\end{equation}	
	%Note that here both $\alpha(t)$ and $\alpha(t)$ depend on the parameter $\theta$. 
\end{lemma}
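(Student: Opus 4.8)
The plan is to compare the two particle flows started from the same initial condition and to estimate the growth of their discrepancy via a \gronwall\ argument. Fix $x_0\in\XM$ and let $x(t)=X(t,x_0;f)$, $y(t)=Y(t,x_0;f)$ be the trajectories under Systems (1) and (2). Since the map $x_0\mapsto(x(t),y(t))$ pushes $\alpha_0$ forward to a coupling of $\rho^1_t$ and $\rho^2_t$, we have the bound $W_2^2(\rho^1_t,\rho^2_t)\le\int_\XM\|x(t)-y(t)\|^2\,\alpha_0(x_0)\,\ud x_0$, so it suffices to control $\int_\XM\|x(t)-y(t)\|^2\,\ud\alpha_0$ uniformly in $t$. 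First I would compute $\frac{\ud}{\ud t}\|x(t)-y(t)\|^2 = 2\langle x(t)-y(t),\,f(t,x(t))-\AM[f](t,y(t))\rangle$ and split the velocity difference as
\[
f(t,x(t))-\AM[f](t,y(t)) = \bigl(f(t,x(t))-\AM[f](t,x(t))\bigr) + \bigl(\AM[f](t,x(t))-\AM[f](t,y(t))\bigr).
\]
The first term is exactly $\delta(t,x(t);f)$ from \eqref{eqn_delta_def}; the second is bounded by $L\|x(t)-y(t)\|$ using that $\AM[f]_t$ is Lipschitz (part 3 of Lemma~\ref{lemma_regulairty}). Applying Cauchy--Schwarz to the $\delta$ term, I get $\frac{\ud}{\ud t}\|x(t)-y(t)\|^2 \le (2L+1)\|x(t)-y(t)\|^2 + \|\delta(t,x(t);f)\|^2$, and integrating in $x_0$ against $\alpha_0$ and then applying \gronwall's inequality yields
\[
\int_\XM\|x(t)-y(t)\|^2\,\ud\alpha_0 \;\le\; e^{(2L+1)T}\int_0^T\!\!\int_\XM\|\delta(s,x(s);f)\|^2\,\ud\alpha_0(x_0)\,\ud s.
\]

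The right-hand side is controlled by $R(f)$: using Lemma~\ref{lemma_change_of_variables} (the periodic change-of-variables formula) exactly as in the derivation of the trajectory-wise form of $R(f)$ in the main text, $\int_\XM\|\delta(s,x(s);f)\|^2\,\ud\alpha_0(x_0) = \int_\XM\|\delta(s,\cdot;f)\|^2\,\ud\rho^1(s,\cdot;f) \le \|\delta(s,\cdot;f)\|^2_{W^{2,2}(\rho^1_s)}$, and integrating over $s\in[0,T]$ gives exactly $R(f)$. Hence $\sup_{t\in[0,T]}W_2^2(\rho^1_t,\rho^2_t)\le e^{(2L+1)T}R(f)=:C_1 R(f)$, where $L$ is the Lipschitz constant of $\AM[f]_t$ furnished by Lemma~\ref{lemma_regulairty} and therefore depends only on $L_0,L_f,L_v$ and $T$; this is the claimed dependence of $C_1$.

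The main obstacle I anticipate is justifying the Lipschitz constant of $\AM[f]_t$ \emph{uniformly in $t\in[0,T]$} — i.e. making sure that the bound on $\nabla\AM[f]_t = -\nabla^2 V - \nabla^2\log\rho^1_t$ does not blow up over the time horizon. This is exactly what part~3 of Lemma~\ref{lemma_regulairty} asserts, so modulo that lemma the argument is clean; but the proof of Lemma~\ref{lemma_regulairty} itself presumably requires propagating the regularity of $\log\rho^1_t$ forward in time using Propositions~\ref{prop_score}--\ref{prop_3rd_gradient_log} and a \gronwall\ estimate on the Hessian (and third derivatives) of $\log\rho^1_t$, which is where Assumptions~\ref{ass_regularity_0}--\ref{ass_regularity_of_v} on the initial score, on $f$, and on $V$ enter and fix the exponential-in-$T$ constants. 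A secondary technical point is the periodicity bookkeeping: since particles may leave $\XM$, one must either work on $\RBB^d$ with $l$-periodic data and use Lemma~\ref{lemma_change_of_variables} to pull every integral back to $\XM$, or equivalently track positions on the torus; I would adopt the former, consistent with the main text, so that the coupling and the change-of-variables step are both legitimate.
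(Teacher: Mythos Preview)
Your proposal is correct and follows essentially the same route as the paper's proof: couple the two flows through a common initial point, split $f(t,x(t))-\AM[f](t,y(t))$ into $\delta(t,x(t);f)$ plus the $\AM[f]$-increment, invoke the Lipschitz bound on $\AM[f]_t$ from Lemma~\ref{lemma_regulairty}, apply \gronwall, and then use Lemma~\ref{lemma_change_of_variables} to identify the time-integrated $\|\delta\|^2$ term with (a piece of) $R(f)$. The only cosmetic differences are that the paper phrases the coupling via the transport map $P_t=Y_t\circ X_t^{-1}$ (and checks its $l$-periodicity explicitly) and uses the cruder bound $2\langle a,b+c\rangle\le 2\|a\|^2+\|b\|^2+\|c\|^2$ before applying Lipschitz, leading to a constant $\exp((2+L^2)T)$ rather than your $\exp((2L+1)T)$.
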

\begin{proof}[A sketch of the proof.]
    We first note that $P(t, \cdot; f) = Y(t, \cdot; f)\circ X(t, \cdot; f)^{-1}$ is a transport map such that $\rho^2(t, \cdot; f) = P(t, \cdot; f)\sharp \rho^1(t, \cdot; f)$.
    Consequently, from the definition of the Wasserstein-2 distance, we have
	\begin{align*}
		W_2^2(\rho_{t}^{1}, \rho_{t}^{2}) \leq&\ \int_\XM \|x - P(t, x; f)\|^2 \ud \rho^{1}(t, x; f) 
		= \int_\XM \|X(t, x; f) - Y(t, x; f)\|^2 \ud \alpha_0(x) \\
		=&\ \int_\XM \|x(t) - y(t)\|^2 \ud \alpha_0(x_0),
	\end{align*}
	where we used the change-of-variables formula of the push-forward measure from Lemma \ref{lemma_change_of_variables} in the first equality and $\{x(t)\}_{t\in[0, T]}$ and $\{y(t)\}_{t\in[0, T]}$ are the trajectory of particles initialized from $x_0$ but driven by Systems (1) and (2) respectively.
	We then study the dynamic of $\fracd{}{t}\|x(t) - y(t)\|^2$ and prove the lemma using the \gronwall's inequality.
\end{proof}	

We then show that $R(f)$ controls the distance between score functions of systems (1) and (2).
The full proof is provided in Appendix \ref{appendix_proof_lemma_score}.
\begin{lemma} \label{lemma_main}
	Recall Systems (1) and (2) in Table \ref{table_ss}. For simplicity of notations, denote their probability density functions by $\rho_{t}^{1}$ and $\rho_{t}^{2}$ respectively.
	Denote the weighted $L_2$ norm by
	\begin{equation}
		\xi_t\ \defi\ \|\nabla\log\rho_{t}^{1} - \nabla\log\rho_{t}^{2}\|^2_{\rho_{t}^{2}}.
	\end{equation}
	Suppose assumptions \ref{ass_regularity_0} to \ref{ass_regularity_of_v} hold. 
	There exists some constant $C_2$ such that for any $t\in[0, T]$, 
	\begin{equation}
		\int_0^t \xi_s \ud s \leq C_2 R(f),
	\end{equation}
	where $C_2$ depends on the maximum evolving time $T$ and $L_0$, $L_f$, $L_v$ defined in assumptions \ref{ass_regularity_0} to \ref{ass_regularity_of_v}.
\end{lemma}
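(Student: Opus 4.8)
The plan is to track the gap between the two score functions \emph{along the trajectories of System~(2)}, turn it into a linear inhomogeneous ODE whose forcing term is assembled only from $\delta$ and its spatial derivatives of order at most $2$, and then close the estimate with \gronwall's inequality together with the change-of-variables formula of Lemma~\ref{lemma_change_of_variables}.

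\emph{A Lagrangian ODE for the score gap.} First I would fix $x_0\in\XM$, let $\{y(t)\}$ be the System~(2) trajectory started at $x_0$ (so $\rho^2_t = Y(t,\cdot;f)\sharp\alpha_0$), and set $u(t;x_0)\defi \nabla\log\rho^1_t(y(t)) - \nabla\log\rho^2_t(y(t))$; by Lemma~\ref{lemma_change_of_variables} one has $\xi_t = \int_\XM |u(t;x_0)|^2\,\ud\alpha_0(x_0)$, so it suffices to control $|u(t;x_0)|$ trajectory-wise. For $\nabla\log\rho^2_t$ along $\{y(t)\}$ I would apply Proposition~\ref{prop_score} directly to System~(2) (legitimate since $\nabla\AM[f]_t,\nabla\udiv\AM[f]_t$ are bounded and Lipschitz by Lemma~\ref{lemma_regulairty}). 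For $\nabla\log\rho^1_t$ I would first rewrite Proposition~\ref{prop_score} in Eulerian form, $\fracpartial{}{t}\nabla\log\rho^1_t = -\nabla\udiv f_t - (\nabla f_t)^\top\nabla\log\rho^1_t - \nabla^2\log\rho^1_t\,f_t$ (an identity of functions on $\XM$, since the System~(1) flow is a bijection of the torus), and then evaluate along $\{y(t)\}$ via $\fracd{}{t}\nabla\log\rho^1_t(y(t)) = \fracpartial{}{t}\nabla\log\rho^1_t(y(t)) + \nabla^2\log\rho^1_t(y(t))\,\AM[f]_t(y(t))$. Subtracting the two evolutions and substituting $\AM[f]_t = f_t - \delta_t$, the $\nabla\udiv f_t$ contributions cancel and I expect to arrive at
\begin{align*}
	\fracd{}{t} u(t;x_0) &= -(\nabla f_t(y(t)))^\top u(t;x_0) - E(t;x_0),\\
	E(t;x_0) &\defi \nabla^2\log\rho^1_t(y(t))\,\delta_t(y(t)) + \nabla\udiv\delta_t(y(t)) + (\nabla\delta_t(y(t)))^\top\nabla\log\rho^2_t(y(t)),
\end{align*}
with the crucial initial condition $u(0;x_0)=0$, since $\rho^1_0 = \rho^2_0 = \alpha_0$.

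\emph{Closing the estimate.} Using $\|\nabla f_t\|\le L_f$ gives $\fracd{}{t}|u|^2 \le (2L_f+1)|u|^2 + |E(t;x_0)|^2$, so with $u(0;x_0)=0$, \gronwall's inequality yields $|u(t;x_0)|^2 \le \int_0^t e^{(2L_f+1)(t-s)}|E(s;x_0)|^2\,\ud s$. Integrating over $x_0\sim\alpha_0$, then over $t\in[0,T]$, and swapping the order of integration reduces the claim to bounding $\int_0^T\!\int_\XM |E(s;x_0)|^2\,\ud\alpha_0(x_0)\,\ud s$ by a constant times $R(f)$. Since $\nabla^2\log\rho^1_s$ is bounded (Lipschitzness of $\nabla\log\rho^1_s$, Lemma~\ref{lemma_regulairty}) and $\nabla\log\rho^2_s$ is bounded (run \gronwall on the evolution of $\nabla\log\rho^2_t(y(t))$ from Proposition~\ref{prop_score}, using Lemma~\ref{lemma_regulairty} and $\|\nabla\log\alpha_0\|\le L_0$ from Assumption~\ref{ass_regularity_0}), the forcing obeys $|E(s;x_0)|^2 \le C\sum_{i=0}^{2}\|\delta^{(i)}(s,y(s);f)\|^2$, and passing back to an integral against $\rho^2_s$ via Lemma~\ref{lemma_change_of_variables} turns the right-hand side into $\int_0^T\|\delta(s,\cdot;f)\|^2_{W^{2,2}(\rho^2_s)}\,\ud s$. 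Finally I would trade the base measure $\rho^2_s$ for $\rho^1_s$: granted $\sup_{s\le T}\|\rho^2_s/\rho^1_s\|_{L^\infty(\XM)}\le C$, this is at most $C\int_0^T\|\delta(s,\cdot;f)\|^2_{W^{2,2}(\rho^1_s)}\,\ud s = C\,R(f)$, and the bound for a general $t\in[0,T]$ follows from $\xi_s\ge 0$. The density ratio is controlled because $\rho^1_s,\rho^2_s$ are pushforwards of $\alpha_0$ by the flow maps $X(s,\cdot;f),Y(s,\cdot;f)$, whose Jacobians solve linear matrix ODEs with uniformly bounded coefficients $\nabla f_s$, resp.\ $\nabla\AM[f]_s$ (Assumption~\ref{ass_regularity_of_f}, Lemma~\ref{lemma_regulairty}); hence all their singular values lie between $e^{-cT}$ and $e^{cT}$ for some constant $c$, and since $\log\alpha_0$ is Lipschitz on the torus (Assumption~\ref{ass_regularity_0}), $\alpha_0$ is bounded above and below, so $\rho^1_s$ and $\rho^2_s$ are uniformly bounded above and below.

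\emph{Main obstacle.} The delicate point is the Lagrangian/Eulerian bookkeeping of the first step --- arranging the cancellation of the $\nabla\udiv f_t$ terms so that the forcing $E$ collapses to something built purely from $\delta,\nabla\delta,\nabla^2\delta$, which is exactly what ties the error to $R(f)$ --- together with the uniform-in-$t$ density comparison used to swap $\rho^2_s$ for $\rho^1_s$; the rest is a routine \gronwall argument.
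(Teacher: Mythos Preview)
Your argument is correct, and the ODE you derive for $u(t;x_0)$ checks out: the $\nabla\udiv f_t$ terms do cancel and the forcing $E$ is exactly the three pieces you wrote, each of which is bounded by a constant times $\sum_{i\le 2}\|\delta^{(i)}(t,y(t))\|$ under Lemma~\ref{lemma_regulairty} and the \gronwall bound on $\|\nabla\log\rho^2_t\|$. The density-ratio step is also sound on the torus under Assumption~\ref{ass_regularity_0}.

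Your route is, however, genuinely different from the paper's. The paper does \emph{not} differentiate $\nabla\log\rho^1_t$ along the System~(2) trajectory. Instead it inserts the System~(1) trajectory $x(t)$ with the same initial point and splits
\[
\xi_t \le \|\nabla\log\rho^1_t(y_t)-\nabla\log\rho^1_t(x_t)\|^2_{\alpha_0} + \|\nabla\log\rho^1_t(x_t)-\nabla\log\rho^2_t(y_t)\|^2_{\alpha_0}.
\]
The first piece is handled by Lipschitzness of $\nabla\log\rho^1_t$ and the $W_2$ bound of Lemma~\ref{thm_sys_1_2}; the second by applying Proposition~\ref{prop_score} to each system along its \emph{own} trajectory and decomposing the resulting difference into pieces one of which is $L_v\xi_\tau$, giving an integral Gr\"onwall inequality for $\xi_t$ itself. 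Because the $\delta$-terms there are evaluated at $x_\tau$, the forcing is automatically measured against $\rho^1_\tau$ and no density-ratio swap is needed.

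So the trade-off is this: the paper spends Lemma~\ref{thm_sys_1_2} to route everything through $x_t$ and thereby lands directly on $R(f)$; you bypass Lemma~\ref{thm_sys_1_2} entirely with a cleaner single-trajectory linear ODE, but must then pay with the uniform $\rho^2_s/\rho^1_s$ comparison. Both arguments are valid and yield the same conclusion with constants depending only on $T,L_0,L_f,L_v$.
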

\begin{proof}[A sketch of the proof.]
    With the change-of-variables lemma, we can expand
    \begin{align*}
		\xi_t =&\ \|\nabla\log\rho_{t}^{1}\circ Y_{t} - \nabla\log\rho_{t}^{2}\circ Y_{t}\|_{\alpha_0}^2 \\
		\leq&\ \|\nabla\log\rho_{t}^{1}(y(t)) - \nabla\log\rho_{t}^{1}(x(t))\|^2_{\alpha_0} + \|\nabla \log \rho_{t}^{1}(x(t)) - \nabla \log \rho_{t}^{2}(y(t))\|^2_{\alpha_0}.
	\end{align*}
	The first term can be control by the Lipschitz continuity of $\nabla \log \rho_t^1$.
	We study the dynamic of the second term using Proposition \ref{prop_score} and use the \gronwall's inequality to establish the lemma.
\end{proof}

Built on the above two lemmas, the following lemma states the most novel part of our analysis which shows that the KL-divergence between $\rho^2(t, \cdot)$ generated by system (2) and the solution to the FPE $\alpha(t, \cdot)$ is controlled by $R(f)$.
\begin{lemma} \label{lemma_kl}
	Recall Systems (1) and (2) in Table \ref{table_ss}. For simplicity of notations, denote their probability density functions by $\rho_{t}^{1}$ and $\rho_{t}^{2}$ respectively and use $\alpha_t$ to denote the solution to the Fokker-Planck equation \eqref{eqn_FPE}.
	Suppose assumptions \ref{ass_regularity_0} to \ref{ass_regularity_of_v} hold.
	For any $t\in[0, T]$, we have
	\begin{equation}
		\mathrm{KL}(\rho_{t}^{2}, \alpha_t) \leq \frac{C_2}{2} R(f),
	\end{equation}
	where $C_2$ is the constant defined in Lemma \ref{lemma_main}.
\end{lemma}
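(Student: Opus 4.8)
The plan is to track the time derivative of $\mathrm{KL}(\rho_{t}^{2}, \alpha_t)$ and show it is bounded by a quantity whose time integral is controlled by $R(f)$ via Lemma \ref{lemma_main}. Both $\rho^2$ and $\alpha$ evolve as continuity equations: $\alpha$ is driven by the true velocity field $f^*_t = -\nabla V_t - \nabla\log\alpha_t$, while $\rho^2$ is driven by $\AM[f]_t = -\nabla V_t - \nabla\log\rho^1_t$ (this is the whole point of introducing System (2) — its drift uses the \emph{score of $\rho^1$}, not of $\rho^2$ itself). So the first step is to write, using the standard formula for the time derivative of relative entropy along two continuity equations,
\begin{equation*}
	\fracd{}{t}\mathrm{KL}(\rho_{t}^{2}, \alpha_t) = \int_\XM \nabla\log\frac{\rho^2_t}{\alpha_t}\cdot\big(\AM[f]_t - f^*_t\big)\,\rho^2_t\,\ud x,
\end{equation*}
where I would first justify this identity via integration by parts, using the periodicity from Lemma \ref{lemma_regulairty}(1) to kill boundary terms, and the regularity in Lemma \ref{lemma_regulairty} to ensure all integrals are finite.

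The second step is to simplify the drift difference. Since $\AM[f]_t - f^*_t = -\nabla\log\rho^1_t + \nabla\log\alpha_t = \nabla\log\frac{\alpha_t}{\rho^1_t}$, and $\nabla\log\frac{\rho^2_t}{\alpha_t} = \nabla\log\rho^2_t - \nabla\log\alpha_t$, I would rewrite the integrand by inserting $\pm\nabla\log\rho^1_t$:
\begin{equation*}
	\fracd{}{t}\mathrm{KL}(\rho_{t}^{2}, \alpha_t) = -\int_\XM \big\|\nabla\log\rho^2_t - \nabla\log\alpha_t\big\|^2\rho^2_t\,\ud x + \int_\XM \big(\nabla\log\rho^2_t - \nabla\log\alpha_t\big)\cdot\big(\nabla\log\rho^2_t - \nabla\log\rho^1_t\big)\,\rho^2_t\,\ud x.
\end{equation*}
The first term is $-\mathrm{I}(\rho^2_t \| \alpha_t)$, a \emph{negative} relative-Fisher-information term, which is the crucial dissipation that makes the argument work. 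Applying Cauchy–Schwarz (or Young's inequality $ab \le \tfrac12 a^2 + \tfrac12 b^2$) to the cross term, the negative Fisher information term absorbs half of it, leaving
\begin{equation*}
	\fracd{}{t}\mathrm{KL}(\rho_{t}^{2}, \alpha_t) \le \frac{1}{2}\int_\XM \big\|\nabla\log\rho^2_t - \nabla\log\rho^1_t\big\|^2\rho^2_t\,\ud x = \frac{1}{2}\,\xi_t,
\end{equation*}
with $\xi_t$ exactly the quantity from Lemma \ref{lemma_main}. Integrating from $0$ to $t$ and noting $\mathrm{KL}(\rho^2_0, \alpha_0) = 0$ since both equal $\alpha_0$, I get $\mathrm{KL}(\rho^2_t, \alpha_t) \le \tfrac12\int_0^t \xi_s\,\ud s \le \tfrac{C_2}{2}R(f)$.

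The main obstacle is rigorously justifying the differentiation-under-the-integral and the integration-by-parts identity in the first step: one needs enough regularity and integrability of $\rho^2_t$, $\alpha_t$, and their logarithmic gradients — including a lower bound keeping the densities away from zero so that $\log(\rho^2_t/\alpha_t)$ and its gradient are well-behaved — together with the periodic boundary conditions to discard boundary contributions. Lemma \ref{lemma_regulairty} supplies boundedness and Lipschitzness of $\nabla\log\rho^1_t$ and of $\AM[f]_t$; I would additionally need comparable control on $\nabla\log\rho^2_t$ and $\nabla\log\alpha_t$ (the latter being standard parabolic regularity for the FPE, the former following from the same propagation-of-regularity arguments used for $\rho^1$), plus a quantitative positive lower bound on $\alpha_t$ and $\rho^2_t$ on the torus over $[0,T]$. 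Once these analytic prerequisites are in place the computation above is essentially a one-line energy estimate; the genuinely delicate point is that the sign of the Fisher information term is favorable, which is what lets us close the estimate without any Grönwall blow-up in $t$.
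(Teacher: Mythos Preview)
Your proposal is correct and follows essentially the same approach as the paper: compute $\fracd{}{t}\mathrm{KL}(\rho^2_t,\alpha_t)$, obtain the negative relative Fisher information term plus the cross term, bound the latter to get $\fracd{}{t}\mathrm{KL}\le\tfrac12\xi_t$, and integrate using Lemma~\ref{lemma_main}. The only cosmetic differences are that the paper expands $\partial_t\rho^2_t$ into its Laplacian and divergence pieces before integrating by parts (whereas you invoke the continuity-equation identity directly), and the paper handles the cross term via the polarization identity $2(a-b)\cdot(a-c)=\|a-b\|^2+\|a-c\|^2-\|b-c\|^2$ rather than Young's inequality, arriving at the identical bound.
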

\begin{proof}
	We study the evolution of the KL divergence between $\rho_{t}^{2}$ and $\alpha_t$.
	Recall that for $\rho_{t}^{2}$, we have
	\begin{equation} \label{eqn_pde_rho_2_t}
		\fracpartial{\rho_{t}^{2}}{t} = \Delta \rho_{t}^{2} + \udiv\left(\rho_{t}^{2} \nabla V_t \right) + \udiv(\rho_{t}^{2} \nabla \log \frac{\rho_{t}^{1}}{\rho_{t}^{2}} ) ,
	\end{equation}
	and for $\alpha_t$ we have
	\begin{equation} \label{eqn_pde_alpha_t}
		\fracpartial{\alpha_t}{t} = \Delta \alpha_t + \udiv\left(\alpha_t \nabla V_t \right) .
	\end{equation}
	We can compute
	\begin{align}
		\fracd{\mathrm{KL}(\rho_{t}^{2}, \alpha_t)}{t} = \int_{\XM} \fracpartial{\rho_{t}^{2}}{t} \log\frac{\rho_{t}^{2}}{\alpha_t} + \fracpartial{\rho_{t}^{2}}{t} - \rho_{t}^{2} \fracpartial{\log \alpha_t}{t}\ud x \notag 
		= \int_{\XM} \fracpartial{\rho_{t}^{2}}{t} \log\frac{\rho_{t}^{2}}{\alpha_t} - \frac{\rho_{t}^{2}}{\alpha_t} \fracpartial{\alpha_t}{t}\ud x,
	\end{align}
	where in the second equality, we use $\int_{\XM} \fracpartial{\rho_{t}^{2}}{t}\ud x = \fracd{\int_{\XM} \rho_{t}^{2} \ud x }{t} = \fracd{1}{t} = 0$.
	Plug \eqref{eqn_pde_rho_2_t} and \eqref{eqn_pde_alpha_t} in the above equation to derive
	\begin{align}
		\notag \fracd{\mathrm{KL}(\rho_{t}^{2}, \alpha_t)}{t} =&\ \int_{\XM} \Delta \rho_{t}^{2} \log\frac{\rho_{t}^{2}}{\alpha_t} + \udiv\left(\rho_{t}^{2} \nabla V_t \right) \log\frac{\rho_{t}^{2}}{\alpha_t} + \udiv(\rho_{t}^{2} \nabla \log \frac{\rho_{t}^{1}}{\rho_{t}^{2}}) \log\frac{\rho_{t}^{2}}{\alpha_t}\ud x \\
		&\ \qquad - \int_\XM \frac{\rho_{t}^{2}}{\alpha_t} \Delta \alpha_t + \frac{\rho_{t}^{2}}{\alpha_t} \udiv\left(\alpha_t \nabla V_t \right) \ud x. \label{eqn_dkl_dt}
	\end{align}
	Combine the first and fourth terms of the above equation.
	Using integration by part, we have that
	\begin{align}
		\notag \int_{\XM} \Delta \rho_{t}^{2} \log\frac{\rho_{t}^{2}}{\alpha_t} - \frac{\rho_{t}^{2}}{\alpha_t} \Delta \alpha_t \ud x =&\ \int_{\XM} - \nabla\rho_{t}^{2}\cdot\nabla\log\frac{\rho_{t}^{2}}{\alpha_t} + \nabla\frac{\rho_{t}^{2}}{\alpha_t}\cdot \nabla \alpha_t \ud x \\
		\notag=&\ \int_{\XM} - \nabla\rho_{t}^{2}\cdot\nabla\log\frac{\rho_{t}^{2}}{\alpha_t} + \frac{\nabla \rho_{t}^{2} \alpha_t - \rho_{t}^{2} \nabla\alpha_t}{(\alpha_t)^2}\cdot \nabla \alpha_t \ud x \\
		=&\ - \|\nabla \log \rho_{t}^{2} - \nabla \log \alpha_t\|^2_{\rho_{t}^{2}},
	\end{align}
	where we note that the integration on the boundary $\partial \XM$ is 0 due to the periodic boundary conditions \eqref{eqn_bc_0} and \eqref{eqn_bc_1}.
	Combine the second and the last terms of \eqref{eqn_dkl_dt}.
	Use integration by part to compute
	\begin{align*}
		\notag&\ \int_\XM\udiv\left(\rho_{t}^{2} \nabla V_t \right) \log\frac{\rho_{t}^{2}}{\alpha_t} - \frac{\rho_{t}^{2}}{\alpha_t} \udiv\left(\alpha_t \nabla V_t \right) \ud x\\
		\notag =&\ \int_\XM- \left(\rho_{t}^{2} \nabla V_t \right)\cdot \nabla\log\frac{\rho_{t}^{2}}{\alpha_t} + \frac{\nabla \rho_{t}^{2} \alpha_t - \rho_{t}^{2} \nabla\alpha_t}{(\alpha_t)^2} \cdot\left(\alpha_t \nabla V_t \right) \ud x = 0\\
	\end{align*}
	\vspace{-3mm}
	We hence have (using integration by part for the third term of \eqref{eqn_dkl_dt})
	\begin{align*}
		\notag \fracd{\mathrm{KL}(\rho_{t}^{2}, \alpha_t)}{t} = - \|\nabla \log \rho_{t}^{2} - \nabla \log \alpha_t\|^2_{\rho_{t}^{2}} 
		&\ - \int_{\XM}\rho_{t}^{2} \nabla \log \frac{\rho_{t}^{1}}{\rho_{t}^{2}} \cdot \nabla\log\frac{\rho_{t}^{2}}{\alpha_t}\ud x.
	\end{align*}
	Using $2(a - b)(a - c) = \|a-b\|^2 + \|a-c\|^2 - \|b -c\|^2$, we have
	\begin{align*}
		&\ - \int_{\XM}\rho_{t}^{2} \nabla \log \frac{\rho_{t}^{1}}{\rho_{t}^{2}}\cdot \nabla\log\frac{\rho_{t}^{2}}{\alpha_t}\ud x \\
		=&\  \frac{1}{2}\left(\|\nabla \log \rho_{t}^{2} - \nabla \log \rho_{t}^{1} \|_{\rho_{t}^{2}}^2 + \|\nabla \log \rho_{t}^{2} - \nabla \log \alpha_t\|^2_{\rho_{t}^{2}} - \|\nabla \log \rho_{t}^{1} - \nabla \log \alpha_t \|_{\rho_{t}^{2}}^2 \right) \\
		\leq&\ \frac{1}{2}\|\nabla \log \rho_{t}^{1} - \nabla\log \rho_{t}^{2}\|_{\rho_{t}^{2}}^2 + \frac{1}{2}\|\nabla \log \rho_{t}^{2} - \nabla \log \alpha_t\|^2_{\rho_{t}^{2}}.
	\end{align*}
	Consequently, we obtain
	\begin{equation*}
		\fracd{\mathrm{KL}(\rho_{t}^{2}, \alpha_t)}{t} \leq \frac{1}{2}\|\nabla \log \rho_{t}^{2} - \nabla \log \rho_{t}^{1}\|_{\rho_{t}^{2}}^2 - \frac{1}{2}\|\nabla \log \rho_{t}^{2} - \nabla \log \alpha_t\|^2_{\rho_{t}^{2}}.
	\end{equation*}
% 	Combining with the above two inequalities, we have
% 	\begin{equation}
% 		\fracd{\mathrm{KL}(\rho_{t}^{2}, \alpha_t)}{t} \leq \frac{1}{2} \|\nabla \log \rho_{t}^{2} - \nabla \log \rho_{t}^{1}\|_{\rho_{t}^{2}}^2.
% 	\end{equation}
	Omitting the negative term and integrating from $0$ to $t$ and using Lemma \ref{lemma_main}, we have our result.
\end{proof}
We now present the proof of Theorem \ref{thm_main}.
\begin{proof}[Proof of Theorem \ref{thm_main}]
	Using Theorem 6.15 of \citep{villani2009optimal}, we have for any $t$
	\begin{equation}
		W_2^2(\rho^2(t, \cdot), \alpha(t, \cdot)) \leq 2ld \text{TV}^2(\rho^2(t, \cdot), \alpha(t, \cdot)) \leq ld \text{KL}(\rho^2(t, \cdot), \alpha(t, \cdot)),
	\end{equation}
	where we use the Pinsker's inequality in the second inequality.
	Using the triangle inequality of the Wasserstein-2 distance, Theorem \ref{thm_main} is a direct consequence of Lemma \ref{thm_sys_1_2} and Lemma \ref{lemma_kl}.
\end{proof}

\section{Experiment}
\paragraph{Setup.}
In this section, we showcase the effecacy of our approach for numerically solving the FPE with the example where the initial distribution is Gaussian, i.e. $\alpha_0 = \NM(\mu_0, \Sigma_0)$, and the drifting term is a quadratic function, i.e. $V(x) = (x - \mu_\infty)^\top\Sigma_\infty^{-1}(x - \mu_\infty)$. 
We use this example since we know the analytical solution of the FPE $\alpha(t, x)$ in this specific instance and hence we can explicitly calculate the difference between the learned hypothesis velocity field $f_\theta$ and the ground truth.
Specifically, we know that for any time $t \geq 0$, the solution $\alpha(t, \cdot) = \NM(\mu_t, \Gamma_t^\top\Gamma_t)$ is a Gaussian distribution where $\mu_t$ and $\Gamma_t$ evolve in the following manner 
\begin{equation}
    \frac{\ud \mu_t}{\ud t} = \Sigma^{-1}_\infty(\mu_\infty - \mu_t),\quad \frac{\ud \Gamma_t}{\ud t} = -\Sigma^{-1}_\infty \Gamma_t + {\Gamma_t^{-1}}^\top, \Gamma_0 = \sqrt{\Sigma_0},
\end{equation}
if we take the the domain $\XM = \RBB^2$ (see for example Eq.~(36) and Eq.~(37) in \cite{liu2020neural}). 
In our experiment, we take $\mu_0 = (-4, -4) $, $\Sigma_0 = \mathrm{diag}(0.7, 1.3)$, and $\mu_\infty = (4, 4)$, $\Sigma_\infty = \mathrm{diag}(1.1, 0.9)$.

\paragraph{Performance Metrics.} We grid the box $[-10, 10]^2$ with a uniform increment of 0.1 over both coordinates. This gives us $201^2=40401$ grid points altogether and we use $\beta$ to denote the uniform distribution over these points. 
We then grid the time interval $[0, 3]$ with a uniform increment of $0.3$. This gives us $11$ distinct time stamps and we use $\gamma$ to denote the uniform distribution over these time stamps.
Define the score estimation error of a hypothesis velocity field $f$ to be $\ell_s(f) = \int \|f(t, x) + \nabla \log\alpha(t, x) + \nabla V(x)\|^2 \mathrm{d} \beta(x) \mathrm{d} \gamma(t)$, where we note that $(-\nabla V - \nabla \log \alpha)$ is the ground truth velocity field.
Additionally, define the density estimation error of a hypothesis density trajectory $\rho$ as $\ell_d(\rho) = \int |\alpha(t, x) - \rho(t, x)| \mathrm{d} \beta(x) \mathrm{d} \gamma(t)$.
We use the these two quantities in our experiment to measure the quality of the recovered solutions from \texttt{NWGF} (our approach) and we include a successful NN-based PDE solver $\texttt{PINN}$ \citep{raissi2019physics} as the baseline.
Note that the implementation of the continuous time $\texttt{PINN}$ model requires a collection of spatial points $\{x_i\}$ for defining the objective loss, which are set to the grid points mentioned above.

\begin{figure} 
\centering
\begin{tabular}{c c c}
    \includegraphics[width=0.3\textwidth]{./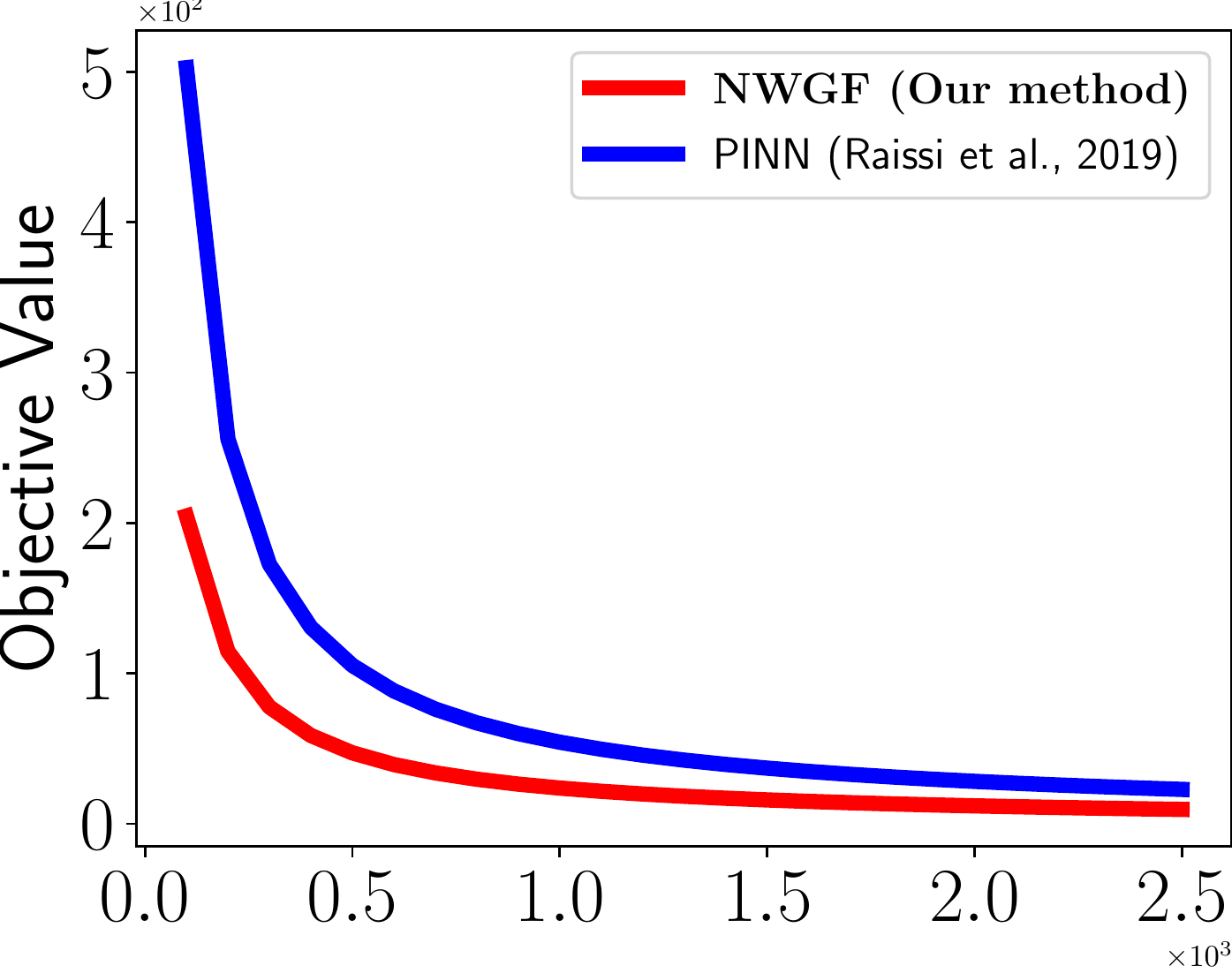} & \includegraphics[width=0.3\textwidth]{./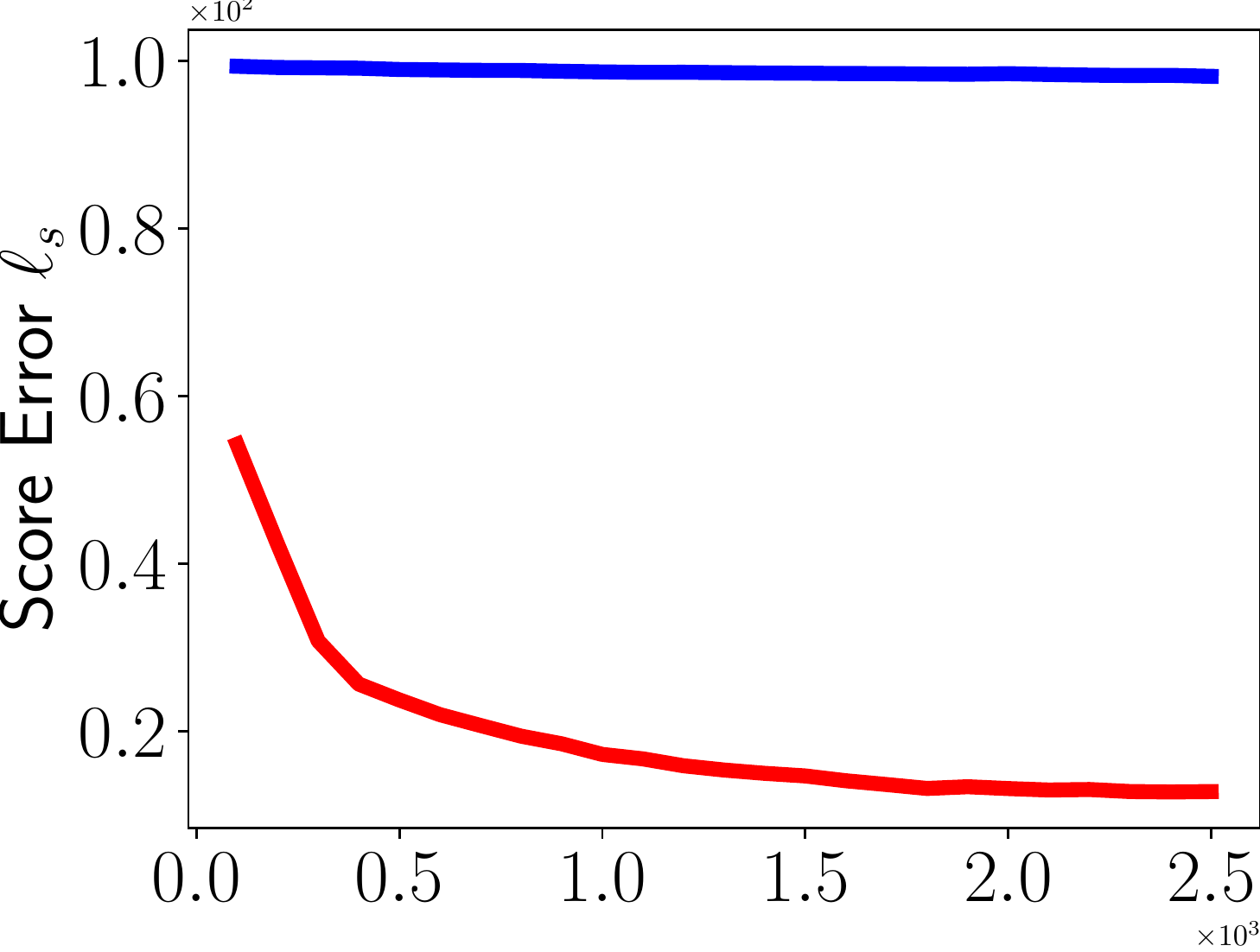} & \includegraphics[width=0.3\textwidth]{./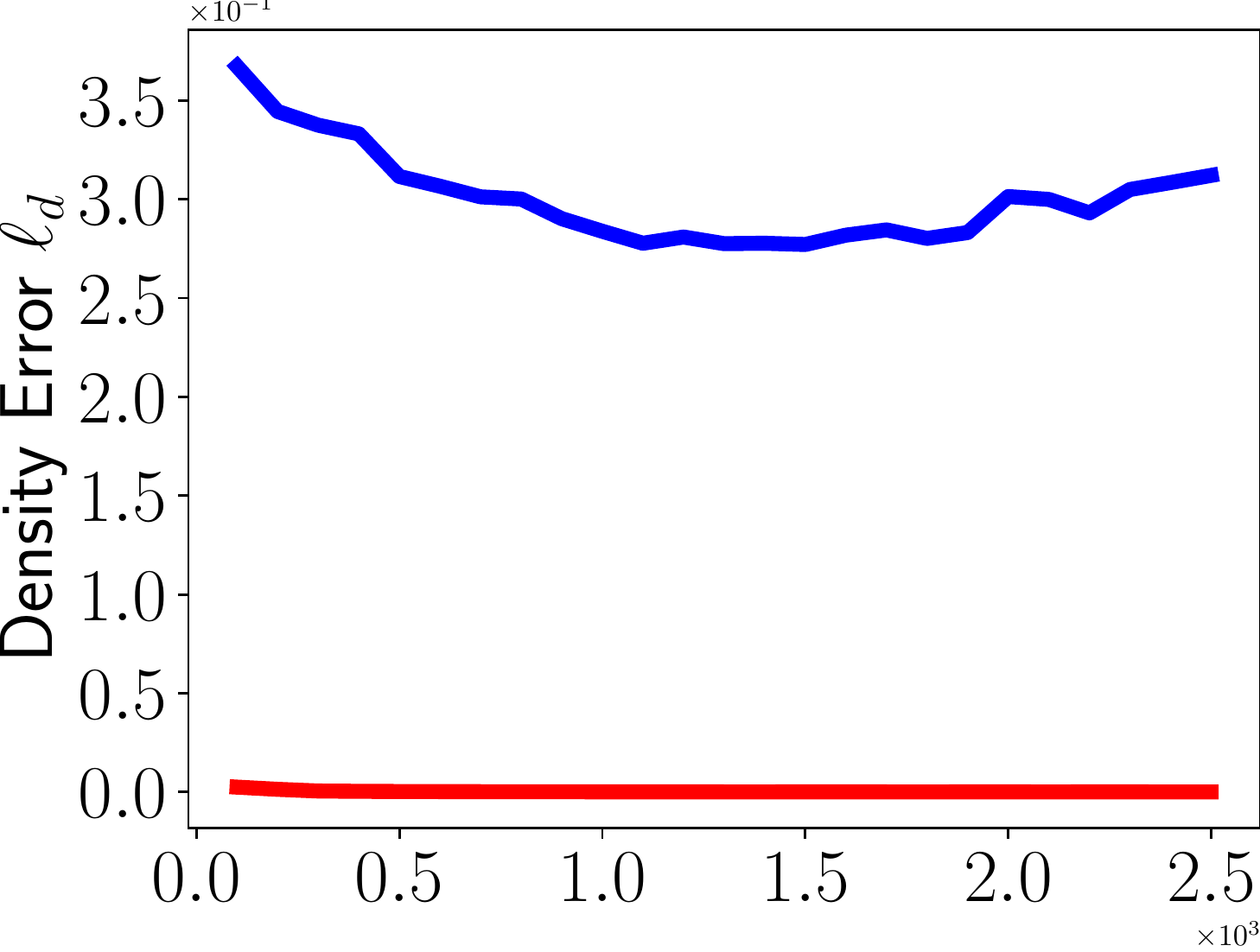} \\
    (i) Objective Value & (ii) Score Estimation Error & (iii) Density Estimation Error
\end{tabular}
\caption{Learning the FPE with a Gaussian initial distribution $\alpha_0$ and a quadratic drifting term $V$.}
\label{fig:result}
\end{figure}

\paragraph{Details.} To avoid negative density values in \texttt{PINN}, instead of directly approximating $\alpha(t, x)$, we use a neural network $g_\theta(t, x)$ to approximate the ground-truth log-density trajectory $\log \alpha$ in \texttt{PINN}. For a fair comparison, the network structures of $f_\theta$ (the hypothesis velocity field used in our approach) and $g_\theta$ are identical except the last layer since $g_\theta$ outputs a scalar (log-density) while $f_\theta$ outputs a $2d$-vector (velocity). We use $\ell_s(-\nabla V -\nabla_x g_\theta)$ to measure the quality of $g_\theta$ as $(-\nabla V -\nabla g_\theta)$ is the hypothesis velocity field that corresponds to $g_\theta$.
We use the strategy discussed in Eq.~\eqref{eqn_recover_density_from_velocity} to reover the density from our hypothesis velocity field $f_\theta$.

\paragraph{Results.}
We report the results of our experiment in Figure \ref{fig:result} and we use \texttt{NWGF} (short for Neural Wasserstein Gradient Flow) to denote our approach.
In plot (i), we observe that stochastic gradient descent is able to reduce the objective values of both \texttt{NWGF} and \texttt{PINN} substantially over $2500$ steps.
However, in plots (ii) and (iii), we observe that our method correctly learns the underlying velocity field and the density trajectories, but these two metrics of $\texttt{PINN}$ barely improve after a long training procedure.
This shows the advantage of our approach.
% \vspace{-8mm}
\section{Conclusion}
In this work, instead of directly approximating the solution to the FPE, we proposed  a learning paradigm  that recovers the entire velocity field, thus understanding better the evolution of the system.
By introducing a velocity-consistency transformation $\AM$ induced by the FPE, we identified a fundamental property of the system called the self-consistency of the FPE. In words, it states that the underlying velocity field of the FPE must be a fixed point of $\AM$.
Based on this novel observation, we designed a potential function $R(f)$ for any hypothesis velocity field $f$ and proved that $R(f)$ controls the Wasserstein-2 distance between the trajectory of distributions generated by $f$ and the exact solution to the FPE.
When the hypothesis velocity field is parameterized by a time-varying neural network, we showed that the stochastic gradient of the proposed potential function with respect to the parameter of the neural network can be computed using the adjoint method.\\
%While the goal of this work is to lay the theoretical foundation for methods that learn the underlying velocity field of the FPE, an important future work is to conduct extensive empirical studies to demonstrate the effectiveness of the proposed approach when handling real world problems.

% Acknowledgments---Will not appear in anonymized version
\acks{
The research of Hassani and Shen is supported by NSF Grants 1837253, 1943064, AFOSR Grant FA9550-20-1-0111, DCIST-CRA, and the AI Institute for Learning-Enabled Optimization at Scale (TILOS).
Zhenfu Wang is supported by the National Key R\&D Program of China, Project Number 2021YFA1002800, NSFC grant No.12171009, Young Elite Scientist Sponsorship Program by China Association for Science and Technology (CAST) No. YESS20200028 and the start-up fund from Peking University.
Amin Karbasi acknowledges funding in direct support of this work from NSF (IIS-1845032), ONR (N00014-19-1-2406),  NSF (2112665), and the AI Institute for Learning-Enabled Optimization at Scale (TILOS).

}

\bibliography{colt2022}

\appendix
\section{Proof of Lemma \ref{lemma_change_of_variables}} \label{appendix_proof_lemma_change_of_variable}
\begin{proof}
	From the change-of-variables formula of the pushforward measure, we have
	\begin{equation}
		\int_\XM g \ud X\sharp \alpha = \int_{X^{-1}(\XM)} g \circ X \ud \alpha.
	\end{equation}
	Let $\Pi: \RBB^d \rightarrow \XM$ be that modulus operator such that given any input $x\in\RBB^d$, $\Pi(x)$ is the unique element in $\XM$ such that
	\begin{equation}
		x = \Pi(x) + \sum_{i=1}^{d} n_i l \cdot e_i,
	\end{equation}
	for some $n_i\in\ZBB$, $i = 1, \ldots, d$.
	In the following, we show that (1) $X^{-1}(\XM)$ does not overlap with itself under the operator $\Pi$, i.e. there do not exist two points $x_1, x_2 \in X^{-1}(\XM)$ with $x_1 \neq x_2$ such that $\Pi(x_1) = \Pi(x_2)$ and (2) $\Pi(X^{-1}(\XM)) = \XM$.
	Suppose that these two statements hold, we have
	\begin{equation}
		\int_{X^{-1}(\XM)} g\circ X \ud \alpha \stackrel{(1)}{=} \int_{\Pi(X^{-1}(\XM))} g\circ X \ud \alpha \stackrel{(2)}{=} \int_{\XM} g\circ X \ud \alpha.
	\end{equation}
	To prove (1), suppose that there exist $x_1, x_2 \in X^{-1}(\XM)$ with $x_1 \neq x_2$ such that $\Pi(x_1) = \Pi(x_2)$. There must exist $m_i\in\ZBB$, $i\in\{1, \ldots, d\}$ such that $x_1 = x_2 + (\cdots, m_i \times l, \cdots)$ and that $m_i$'s cannot be all zeros. 
	Since $X(x) - x$ is $l$-periodic, we have
	\begin{equation}
		X(x_1) - x_1 = X(x_2) - x_2 \Rightarrow X(x_1) = X(x_2) + (\cdots, m_i \times l, \cdots).
	\end{equation}
	Since at least one of the $m_i$'s are non-zero, it is impossible that $X(x_1)$ and $X(x_2)$ belong to $\XM$ simultaneously, which leads to a contradiction.\\
	To prove (2), we first observe that $\Pi(X^{-1}(\XM)) \subseteq \XM$ holds trivially due to the definition of $\Pi$, and hence we just need to show that $\XM \subseteq \Pi(X^{-1}(\XM))$.
	We prove via contradiction.
	Suppose that there exists $y \in \XM$ such that $y\notin \Pi(X^{-1}(\XM))$.
	From the definition of the operator $\Pi$, we can write
	\begin{equation} \label{eqn_lemma_change_of_variables}
		X(y) = \Pi(X(y)) + (\cdots, m_i \times l, \cdots),
	\end{equation}
	for some $m_i\in\ZBB$, $i\in\{1, \ldots, d\}$.
	Since $X^{-1}$ is periodic, we have that
	\begin{equation}
		y - (\cdots, m_i \times l, \cdots) = X^{-1}(X(y) - (\cdots, m_i \times l, \cdots)) \stackrel{\eqref{eqn_lemma_change_of_variables}}{=} X^{-1}(\Pi(X(y))) \in X^{-1}(\XM).
	\end{equation}
	However, the above statement means $y \in \Pi(X^{-1}(\XM))$ which contradicts to the definition of $y$.
\end{proof}
\section{Proof of Proposition \ref{prop_score}}
\begin{proof}
	First, compute that
	\begin{align*}
		\fracd{ }{ t} \nabla \log \rho_{t}^{1}(x(t)) =&\ \fracpartial{}{t} \nabla \log \rho_{t}^{1}(x(t)) + \fracd{x(t)}{t} \fracpartial{}{x} \nabla \log \rho_{t}^{1}(x(t))   \\
		=&\  \nabla \fracpartial{}{t} \log \rho_{t}^{1}(x(t)) + f_{t}(x(t)) \nabla^2 \log \rho_{t}^{1}(x(t))  .
	\end{align*}
	Using the Fokker Planck equation \eqref{eqn_FPE}, we derive
	\begin{equation}
		\fracpartial{}{t} \log \rho_{t}^{1} = -\udiv f_{t} - \nabla \log \rho_{t}^1 \cdot f_{t},
	\end{equation}
	which together with 
	\begin{align*}
		\nabla (\nabla \log \rho_{t}^1 \cdot f_{t}) =    \nabla^2 \log \rho_{t}^1 f_{t} +  
		\left(\nabla{f_{t}}\right)^\top \nabla \log \rho_{t}^1  
	\end{align*}
	allows us to compute
	\begin{align*}
		\fracd{ }{ t} \nabla \log \rho_{t}^{1}(x(t)) = - \nabla \udiv f_{t}(x(t))  -  \left(\nabla{f_{t}}(x(t))\right)^\top \nabla \log \rho_{t}^1(x(t))  .
	\end{align*}
	In the above computation, we use the fact that the term $ \nabla^2 \log \rho_{t}^{1}(x(t)) f_{t}(x(t)) $ is canceled.
\end{proof}

\section{Proof of Proposition \ref{prop_Hessian_log}}
\begin{proof}
	For compactness, we use $\partial_{i,j}$ to denote $\frac{\partial^2}{\partial x_i \partial x_j}$.
	First, compute that
	\begin{align*}
		\fracd{ }{ t} \partial_{i,j} \log \rho^{1}_{t, \theta}(x(t)) =&\ \fracpartial{}{t} \partial_{i,j} \log \rho_{t}^{1}(x(t)) + \fracpartial{}{x} \partial_{i,j} \log \rho_{t}^{1}(x(t)) \cdot \fracd{x(t)}{t} \\
		=&\  \partial_{i,j} \fracpartial{}{t} \log \rho_{t}^{1}(x(t)) + \fracpartial{}{x} \partial_{i,j} \log \rho_{t}^{1}(x(t)) \cdot f_{t}(x(t)).
	\end{align*}
	Using the Fokker Planck equation \eqref{eqn_FPE}, we derive
	\begin{equation}
		\fracpartial{}{t} \log \rho_{t}^{1} = -\udiv f_{t} - \nabla \log \rho_{t} \cdot f_{t},
	\end{equation}
	which together with 
	\begin{align*}
		\partial_{i,j} (\nabla \log \rho_{t}^1 \cdot f_{t}) =
		 \partial_{i,j} \nabla \log \rho_{t}^1 \cdot f_{t} + \partial_i\nabla \log \rho_{t}^1 \cdot \partial_j f_{t} \\
		 + \partial_i f_{t}\cdot \partial_j \nabla \log \rho_{t}^1 +  \partial_{i,j} f_{t} \cdot \nabla \log \rho_{t}^1
	\end{align*}
	allows us to compute
	\begin{align*}
		\fracd{ }{ t} \partial_{i,j} \log \rho^{1}_{t, \theta}(x(t)) = - \partial_{i,j} \udiv f_{t}(x(t))  -  \partial_i \nabla \log \rho_{t}(x(t))\cdot \partial_j f_{t}(x(t)) \\
		- \partial_i f_{t}(x(t))\cdot \partial_j \nabla \log \rho_{t}(x(t)) -  \partial_{i,j}f_{t}(x(t)) \cdot \nabla \log \rho_{t}(x(t)).
	\end{align*}
	In the above computation, we use the fact that the term $\fracpartial{}{x} \partial_{i,j} \log \rho_{t}^{1}(x(t)) \cdot f_{t}(x(t))$ is canceled.
\end{proof}
\section{Proof of Proposition \ref{prop_3rd_gradient_log}}
\begin{proof}
		For compactness, we use $\partial_{i,j,k}$ to denote $\frac{\partial^3}{\partial x_i \partial x_j \partial x_k}$.
	First, compute that
	\begin{align*}
		\fracd{ }{ t} \partial_{i,j,k} \log \rho^{1}_{t, \theta}(x(t)) =&\ \fracpartial{}{t} \partial_{i,j,k} \log \rho^{1}(t, x(t); \theta) + \fracpartial{}{x} \partial_{i,j,k} \log \rho^{1}(t, x(t); \theta) \cdot \fracd{x(t)}{t} \\
		=&\ \partial_{i,j,k} \fracpartial{}{t}\log \rho^{1}(t, x(t); \theta) + \fracpartial{}{x} \partial_{i,j,k} \log \rho^{1}(t, x(t); \theta) \cdot f_{t}(x(t)).
	\end{align*}
	Using the Fokker Planck equation \eqref{eqn_FPE}, we derive
	\begin{equation}
		\fracpartial{}{t} \log \rho_{t}^{1} = -\udiv f_{t} - \nabla \log \rho_{t} \cdot f_{t},
	\end{equation}
	which together with 
	\begin{align*}
		\partial_{i,j,k} (\nabla \log \rho_{t}^1 \cdot f_{t}) =  \partial_{i,j,k}\nabla \log \rho_{t}^1 \cdot f_{t} + \partial_{i,j} \nabla \log \rho_{t}^1 \cdot \partial_{k} f_{t} \\
		\partial_{i,k}\nabla \log \rho_{t}^1 \cdot \partial_{j} f_{t} + \partial_{i} \nabla \log \rho_{t}^1 \cdot \partial_{j, k} f_{t} \\
		\partial_{j,k}\nabla \log \rho_{t}^1 \cdot \partial_{i} f_{t} + \partial_{j} \nabla \log \rho_{t}^1 \cdot \partial_{i, k} f_{t} \\
		\partial_{k}\nabla \log \rho_{t}^1 \cdot \partial_{i,j} f_{t} + \nabla \log \rho_{t}^1 \cdot \partial_{i, j, k} f_{t}
	\end{align*}
	allows us to compute
	\begin{align*}
		\fracd{ }{ t}\partial_{i,j,k} \log \rho_{t}^{1}((x(t))) =&\ -\partial_{i,j, k}\udiv f_{t}(x(t)) - \partial_{i,j} \nabla \log \rho_{t}^1(x(t)) \cdot \partial_{k} f_{t}(x(t)) \\
		&\ -\partial_{i,k}\nabla \log \rho_{t}^1(x(t)) \cdot \partial_{j} f_{t}(x(t)) - \partial_{i} \nabla \log \rho_{t}^1(x(t)) \cdot \partial_{j, k} f_{t}(x(t)) \\
		&\ -\partial_{j,k}\nabla \log \rho_{t}^1(x(t)) \cdot \partial_{i} f_{t}(x(t)) - \partial_{j} \nabla \log \rho_{t}^1(x(t)) \cdot \partial_{i, k} f_{t}(x(t)) \\
		&\ -\partial_{k}\nabla \log \rho_{t}^1(x(t)) \cdot \partial_{i,j} f_{t}(x(t)) - \nabla \log \rho_{t}^1(x(t)) \cdot \partial_{i, j, k} f_{t}(x(t)).
	\end{align*}
	In the above computation, we use the fact that the term $\partial_{i,j,k} \nabla \log \rho_{t}^{1}(x(t)) \cdot f_{t}(x(t))$ is canceled.
\end{proof}
\section{Gradient Computation via Adjoint Method} \label{appendix_adjoint_method}
Consider the ODE system
\begin{align*}
	\dot s(t) =&\ \psi(s(t), t, \theta) \\
	s(0) =&\ s_0,
\end{align*}
and the objective loss
\begin{equation}
	\ell(\theta) = \int_0^T g(s(t), t, \theta) \ud t.
\end{equation}
The following proposition computes the gradient of $\ell$ w.r.t. $\theta$.
We omit the parameters of the functions for succinctness. We note that all the functions in the integrands should be evaluated at the corresponding time stamp $t$, e.g. $b^\top \fracpartial{h}{\theta}\ud t$ abbreviates for $b(t)^\top \fracpartial{}{\theta}h(\xi(t), x(t), t, \theta)\ud t$.
\begin{proposition}
	\begin{equation}
		\frac{\ud \ell}{\ud \theta} = \int_{0}^T a^\top \fracpartial{\psi}{\theta} + \fracpartial{g}{\theta}\ud t.
	\end{equation}
	where $a(t)$ is solution to the following final value problems
	\begin{equation}
		\dot a^\top + a^\top \fracpartial{\psi}{s} + \fracpartial{g}{s} = 0, a(T) = 0, 
	\end{equation}
\end{proposition}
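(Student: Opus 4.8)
The plan is the standard Lagrangian (adjoint-state) derivation. Since the state trajectory $\{s(t)\}_{t\in[0,T]}$ satisfies $\dot s(t) = \psi(s(t), t, \theta)$, for \emph{any} costate trajectory $a(\cdot)$ of the same dimension as $s$ the term $\int_0^T a^\top(\dot s - \psi)\,\ud t$ is identically zero; we may therefore write
\[
\ell(\theta) = \int_0^T g(s(t), t, \theta)\,\ud t \;-\; \int_0^T a^\top\bigl(\dot s - \psi\bigr)\,\ud t ,
\]
leaving $a$ to be specified later. Before differentiating, I would first invoke standard ODE-sensitivity theory: under the regularity of $\psi$ and $g$ (which, for the specific $\psi, g$ assembled from $f_\theta$, $V$ and Propositions~\ref{prop_score}--\ref{prop_3rd_gradient_log}, is guaranteed by Assumptions~\ref{ass_regularity_0}--\ref{ass_regularity_of_v}), the map $\theta\mapsto s(t)$ is continuously differentiable, so the sensitivity $u(t)\defi \fracpartial{s(t)}{\theta}$ exists, solves the variational equation $\dot u = \fracpartial{\psi}{s}u + \fracpartial{\psi}{\theta}$, and differentiation under the integral sign is justified.

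Differentiating the displayed identity in $\theta$ and using the chain rule together with $\dot u = \fracpartial{}{\theta}\dot s$ (mixed partials commute) gives
\[
\fracd{\ell}{\theta} = \int_0^T \Bigl(\fracpartial{g}{s} u + \fracpartial{g}{\theta}\Bigr)\ud t \;-\; \int_0^T a^\top\Bigl(\dot u - \fracpartial{\psi}{s} u - \fracpartial{\psi}{\theta}\Bigr)\ud t .
\]
Next I would integrate $\int_0^T a^\top \dot u\,\ud t$ by parts in time, turning it into $[a^\top u]_0^T - \int_0^T \dot a^\top u\,\ud t$, and then collect the coefficient of $u(t)$ inside the integral:
\[
\fracd{\ell}{\theta} = [a^\top u]_0^T \;-\; \int_0^T \Bigl(\dot a^\top + a^\top\fracpartial{\psi}{s} + \fracpartial{g}{s}\Bigr) u\,\ud t \;+\; \int_0^T\Bigl(a^\top\fracpartial{\psi}{\theta} + \fracpartial{g}{\theta}\Bigr)\ud t .
\]

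Finally I would eliminate the two $u$-dependent contributions by choosing $a$. Since the initial condition $s(0) = s_0(x_0)$ is independent of $\theta$, we have $u(0) = 0$, so the boundary term at $t=0$ vanishes; imposing $a(T) = 0$ kills the boundary term at $t=T$. Picking $a$ to be the solution of the linear final-value problem $\dot a^\top + a^\top\fracpartial{\psi}{s} + \fracpartial{g}{s} = 0$, $a(T)=0$ — which exists and is unique because its coefficients are bounded on $[0,T]$ — annihilates the remaining integral against $u$. What is left is exactly $\fracd{\ell}{\theta} = \int_0^T a^\top\fracpartial{\psi}{\theta} + \fracpartial{g}{\theta}\,\ud t$, as claimed.

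The only genuinely delicate point is the first step: establishing the $C^1$-dependence of $s(\cdot)$ on $\theta$ and the legitimacy of differentiating under the integral, which I would handle by a \gronwall\ bound on the variational equation together with the boundedness of $\fracpartial{\psi}{s}$ on $[0,T]$. Everything after that is routine bookkeeping — one integration by parts and one choice of multiplier — so I do not expect further obstacles.
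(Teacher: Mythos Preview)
Your proposal is correct and follows essentially the same Lagrangian/adjoint argument as the paper: the only cosmetic difference is that the paper integrates $\int_0^T a^\top \dot s\,\ud t$ by parts \emph{before} differentiating in $\theta$, whereas you differentiate first and then integrate $\int_0^T a^\top \dot u\,\ud t$ by parts. Note that in your collected expression the signs of the boundary term and of the $u$-integral are flipped (they should read $-[a^\top u]_0^T + \int_0^T(\dot a^\top + a^\top\fracpartial{\psi}{s} + \fracpartial{g}{s})u\,\ud t$), but since both terms are annihilated by your choice of $a$ this does not affect the conclusion.
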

\begin{proof}
	Let us define the Lagrange multiplier function (or the adjoint state) $a(t)$ dual to $s(t)$.
	Moreover, let $\LM$ be an augmented loss function of the form
	\begin{equation}
		\LM = \ell - \int_0^T a^\top(\dot s - \psi) \ud t.
	\end{equation}
	Since we have $\dot s(t) = \psi(s(t), t, \theta)$ by construction, the integral term in $\LM$ is always null and $a$ can be freely assigned while maintaining $\ud \LM/\ud \theta = \ud \ell/\ud \theta$.
	Using integral by part, we have
	\begin{equation}
		\int_0^T a^\top\dot s\ \ud t = a(t)^\top s(t)\vert_0^T - \int_0^T s^\top \dot a\ \ud t.
	\end{equation}
	We obtain
	\begin{align}
		\LM = - a(t)^\top s(t)\vert_0^T + \int_0^T \dot a^\top s + a^\top \psi + g\ \ud t.
	\end{align}
	
	Now we compute the gradient of $\LM$ w.r.t. $\theta$ as
	\begin{equation*}
		\frac{\ud \ell}{\ud \theta} =  \frac{\ud \LM}{\ud \theta} = - a(T)^\top\frac{\ud x(T)}{\ud \theta}  + \int_0^T \dot a^\top \frac{\ud s}{\ud \theta} + a^\top \left(\fracpartial{\psi}{\theta} + \fracpartial{\psi}{s} \frac{\ud s}{\ud \theta} \right) \ud t
		+ \int_0^T \fracpartial{g}{s} \frac{\ud s}{\ud \theta} +  \fracpartial{g}{\theta}\ud t,
	\end{equation*}
	which by rearranging terms yields to
	\begin{align*}
		\frac{\ud \ell}{\ud \theta} = \frac{\ud \LM}{\ud \theta} = - a(T)^\top\frac{\ud x(T)}{\ud \theta} + \int_{0}^T a^\top \fracpartial{\psi}{\theta} +  \fracpartial{g}{\theta}\ud t
		+ \int_0^T \left(\dot a^\top + a^\top \fracpartial{\psi}{s} +  \fracpartial{g}{s}\right)\frac{\ud s}{\ud \theta} \ud t.
	\end{align*}
	Now by taking $a$ satisfying the \emph{final} value problems
	\begin{equation}
		\dot a^\top + a^\top \fracpartial{\psi}{s} + \fracpartial{g}{s} = 0, a(T) = 0, 
	\end{equation}
	we derive the result
	\begin{equation}
		\frac{\ud \ell}{\ud \theta} = \int_{0}^T a^\top \fracpartial{\psi}{\theta} + \fracpartial{g}{\theta}\ud t.
	\end{equation}
\end{proof}

\section{Proof of Lemma \ref{lemma_regulairty}} \label{appendix_proof_lemma_regularity}
\begin{proof}	
	Recall the definition of $\rho_{t}^1$ in \eqref{eqn_pushforward_rho1}.
	$\rho_{t}^1$ is $l$-periodic since it can be expressed as a push-forward measure of an $l$-periodic measure $\alpha_0$ under an $l$-periodic map $X(t, \cdot)$.
	Consequently, $\nabla \log \rho_{t}^1$ is also $l$-periodic, which together with the $l$-periodicity of $V$ shows that the map $Y(t, \cdot)$ is also $l$-periodic.
	Following a similar argument, we see that $\rho_{t}^2$ is also $l$-periodic.
	
	To prove that $\|\nabla \log \rho_{t}^1(x)\|$ is bounded for all $x \in \XM$, recall Proposition \ref{prop_score} where we show that for any $x\in\XM$
	\begin{equation}
		\nabla \log \rho_{t}^1(x) = \nabla \log \alpha_0(x(0)) - \int_0^t \nabla \udiv f_{s}(x(s)) + \nabla f_{s}(x(s))^\top\nabla \log \rho_{s}^1(x(s)) \ud s.
	\end{equation}
	Here ${x(s)}_{s \in [0, t]}$ is the trajectory of the final value problem
	\begin{equation} \label{eqn_trajectory_fvp}
		\frac{\ud x(s)}{\ud s} = f_{s}(x(s)), x(t) = x.
	\end{equation}
	Using \gronwall's inequality, we can bound
	\begin{equation}
		\|\nabla \log \rho_{t}^1(x)\| \leq (L_0 + t L_f)\exp(t L_f) \leq (L_0 + T L_f)\exp(T L_f).
	\end{equation}
	To prove that $\|\nabla \log \rho_{t}^1(x)\|$ is Lipschitz continuous for all $x \in \XM$, recall Proposition \ref{prop_Hessian_log} where we show that for any $x\in\XM$
	\begin{align*}
		\nabla^2 \log \rho^{1}_{t}(x) = \nabla^2 \log \alpha_0(x(0)) - \int_0^t\nabla^2 \udiv f_{s}(x(s))  + \left(\nabla^2 \log\rho_{s}^{1}(x(s))\right)^\top \JM_{f_{s}} (x(s)) \qquad \notag \\
		+  \left(\JM_{f_{s}} (x(s))\right)^\top \nabla^2 \log\rho_{s}^{1}(x(s))  + \nabla^2 f_{s}(x(s))\otimes_1 \nabla\log\rho_{s}^{1}(x(s)) \ud s,
	\end{align*}
	where $x(s)$ is the trajectory defined in \eqref{eqn_trajectory_fvp}, $\JM_f$ denotes the Jacobian matrix of a vector valued function $f$, and 
	\begin{equation}
		\nabla^2 f_{s}(x(s))\otimes_1 \nabla\log\rho_{s}^{1}(x(s)) = \begin{bmatrix}
			\nabla^2 (f_{s})_{[1]}(x(s)) \nabla\log\rho_{s}^{1}(x(s)) \\
			\cdots\\
			\nabla^2 (f_{s})_{[d]}(x(s)) \nabla\log\rho_{s}^{1}(x(s))
			\end{bmatrix} \in \RBB^{d \times d}.
	\end{equation}
	Here $f_{[i]}$ denotes the $i$th entry of a vector valued function $f$.
	We can bound the spectral norm $\|\nabla^2 \log \rho_{t}^1(x)\|_{op}$ by (note that $x(t) = x$)
	\begin{align*}
		\|\nabla^2 \log \rho_{t}^1(x(t))\|_{op} \leq&\ L_0 + \int_{0}^{t} L_f + 2L_f\|\nabla^2 \log \rho_{s}^1(x(s))\|_{op} + L_f B_1 \ud s \\
		=&\ L_0 + t(L_f + L_f B_1) + \int_0^t 2L_f \|\nabla^2 \log \rho_{s}^1(x_s)\| \ud s,
	\end{align*}
	where we denote $B_1 = (L_0 + T L_f)\exp(T L_f)$.
	Use \gronwall's inequality to derive
	\begin{equation} \label{eqn_bounded_Hessian_log_rho_1}
		\|\nabla^2 \log \rho_{t}^1(x)\|_{op} \leq (L_0 + t (L_f + B_1L_f))\exp(2t L_f) \leq (L_0 + T(L_f + B_1L_f))\exp(2T L_f),
	\end{equation}

	To see that $\|\nabla \AM[f]_{t}\|_{op}$ is bounded over $\XM$, observe that
	\begin{equation}
		\nabla \AM[f]_{t} = - \nabla^2 V_t - \nabla^2 \log \rho_{t}^1,
	\end{equation}
	which is bounded due to Assumption \ref{ass_regularity_of_v} and \eqref{eqn_bounded_Hessian_log_rho_1}. 
	To see that $\nabla \AM[f]_{t}$ is Lipschitz continuous, we need to prove that the spectral norm of the following tensor is bounded
	\begin{equation}
		\nabla^2 \AM[f]_{t} = - \nabla^3 V_t - \nabla^3 \log \rho_{t}^1.
	\end{equation}
	The first term is  bounded due to Assumption \ref{ass_regularity_of_v}.
	To bound the second term, use Proposition \ref{prop_3rd_gradient_log} to bound (note that $x(t) = x$)
	\begin{align*}
		\|\nabla^3 \log&\ \rho_{t}^1(x(t))\|_{op} \leq \|\nabla^3 \log \alpha_0(x(0))\|_{op} \\
		&\ + \int_0^t\|\nabla^3 \udiv f_{s}(x(s))\|_{op}  + 3\|\nabla^2 f_{s}(x(s))\|_{op}\|\nabla^2 \log\rho_{s}^{1}(x(s))\|_{op} \\
		&\ + 3\|\nabla f_{s}(x(s))\|_{op}\|\nabla^3 \log\rho_{s}^{1}(x(s))\|_{op} + \|\nabla\log\rho_{s}^{1}(x(s))\| \|\nabla^3 f_{s}(x(s))\|_{op} \ud s,
	\end{align*}
	Using \gronwall's inequality, we can bound
	\begin{align*}
		\|\nabla^3 \log \rho_{t}^1(x)\|_{op} \leq (L_0 + t (L_f + B_2L_f + B_1L_f))\exp(3t L_f) \\
		\leq (L_0 + T (L_f + B_2L_f + B_1L_f))\exp(3T L_f),
	\end{align*}
	where we denote $B_2 = 3(L_0 + T(L_f + B_1L_f))\exp(2T L_f)$.
	
	The boundedness of $\|\nabla \udiv \AM[f]_{t}(x)\|$ and the Lipschitz continuity of $\nabla \udiv \AM[f]_{t}$ hold following the same argument above under the assumptions \ref{ass_regularity_0} to \ref{ass_regularity_of_v}.
\end{proof}
\subsection{Proof of Lemma \ref{thm_sys_1_2}} \label{appendix_proof_sys_1_2}
\begin{proof}
%	Fix the time $t$ and the neural network parameter $\theta$.
	In this proof, for simplicity of the notation, we use $\rho_{t}^{1}$ and $\rho_{t}^{2}$ to denote the probability density functions of systems (1) and (2) and use 
	$X_{t}$ and $Y_{t}$ to denote the corresponding particle maps.
	
	The Wasserstein-2 metric between $\rho_{t}^{1}$ and $\rho_{t}^{2}$ can be written as:
	\begin{equation*}
		W_2^2(\rho_{t}^{1}, \rho_{t}^{2}) = \inf_{P:\ P_\sharp \rho_{t}^{1} = \rho_{t}^{2}} \int_\XM \|x - P(x)\|^2 \ud \rho_{t}^{1}(x),
	\end{equation*}
	where the infimum is taken over all the pushforward maps $P$ such that $P_\sharp \rho_{t}^{1} = \rho_{t}^{2}$.
	From the Lipschitz continuity of the velocity field $f$ in Assumption \ref{ass_regularity_of_f}, the particle map $X_t$ of System (1) is invertible.
	Moreover, recall that Systems (1) and (2) have the same initial distribution $\alpha_0$.	
	We have an upper bound on $W_2^2(\rho_{t}^{1}, \rho_{t}^{2})$ by considering a special map $P_{t, \theta} = Y_{t}\circ X_{t}^{-1}$, where we use $X_{t}$ and $Y_{t}$ to denote the particle maps of systems (1) and (2) compactly (see Table \ref{table_ss}).
	We have the feasibility of $P_{t, \theta}$ by the definitions of $\rho_{t}^{1}$ and $\rho_{t}^{2}$, 
	\begin{equation}
		P_{t, \theta}\sharp \rho_{t}^{1} = {Y_{t}}\sharp( X_{t}^{-1}\circ X_{t}) \sharp \alpha_0 =  \rho_{t}^{2}.
	\end{equation}
	
	Additionally, we have that $\|x - P_{t, \theta}(x)\|$ is $l$-periodic: 
	\begin{align*}
		\|x + le_i - P_{t, \theta}(x + le_i)\| =&\ \|x + le_i - Y_{t} \circ X_{t}^{-1}(x + le_i)\| \stackrel{(1)}{=} \|x + le_i - Y_{t} (X_{t}^{-1}(x) + le_i)\|\\
		\stackrel{(2)}{=}&\ \|x + le_i - (Y_{t} (X_{t}^{-1}(x)) + le_i)\| = \|x - P_{t, \theta}(x)\|,
	\end{align*}
	where in (1) we use $X_{t}^{-1}(x + le_i) = X_{t}^{-1}(x) + le_i$ since
	\begin{align*}
		&\ X_{t}(X_{t}^{-1}(x + le_i) - le_i) - (X_{t}^{-1}(x + le_i) -le_i) = X_{t}(X_{t}^{-1}(x + le_i)) - X_{t}^{-1}(x + le_i) \\
		\Leftrightarrow&\  X_{t}(X_{t}^{-1}(x + le_i) - le_i) = x \quad\quad \Rightarrow X_{t}^{-1}(x + le_i) = X_{t}^{-1}(x) + le_i,
	\end{align*}
	and in (2) we use $Y_{t}(a + le_i)= Y_{t}(a) + le_i$ following a similar argument.
	Therefore, we can bound
	\begin{align*}
		W_2^2(\rho_{t}^{1}, \rho_{t}^{2}) \leq \int_\XM \|x - P_{t, \theta}(x)\|^2 \ud \rho_{t}^{1}(x) =&\ \int_\XM \|X_{t}(x) - Y_{t}(x)\|^2 \ud \alpha_0(x) \\
		=&\ \int_\XM \|x_t - y_t\|^2 \ud \alpha_0(x_0),
	\end{align*}
	where we used the change-of-variables formula of the push-forward measure from Lemma \ref{lemma_change_of_variables} in the first equality and $\{x_t\}_{t\in[0, T]}$ and $\{y_t\}_{t\in[0, T]}$ are the trajectory of particles initialized from $x_0$ but driven by Systems (1) and (2) respectively.
	Hence, we can bound the Wasserstein-2 distance between the trajectory of probability distributions by studying the distance between the particles driven by the two systems, which is proved to be bound by $R(f)$ in expectation ($x_0 \sim \alpha_0$) in the following.

	Suppose two particles are initialized from the same position $x_0$, but follow System (1) and System (2) respectively.
	The change of their distance at time $t$ can be computed by
	\begin{align*}
		&\ \frac{d}{dt}\|x_t - y_t\|^2 = 2 \left(x_t - y_t\right)^\top(\fracd{x_t}{t} - \fracd{y_t}{t})
		=  2 \left(x_t - y_t\right)^\top (f(t,  x_t) - \AM[f](t,  y_t)) \\
		=&\  2 \left(x_t - y_t\right)^\top \left(f(t,  x_t) - \AM[f](t,  x_t)\right) + 2 \left(x_t - y_t\right)^\top \left(\AM[f](t,  x_t) - \AM[f](t,  y_t)\right) \\
		\leq &\ 2\|x_t - y_t\|^2 + \|f(t,  x_t) - \AM[f](t,  x_t)\|^2 + \|\AM[f](t,  x_t) - \AM[f](t,  y_t)\|^2,
	\end{align*}
	where $\AM[f]$ is the velocity field of System (2) and the transformation $\AM$ is defined in equation \eqref{eqn_transform_A}.
	Bound the the last term on the RHS can be bounded by $L^2_v\|x_t - y_t\|^2$ using the Lipschitz continuity of $\AM[f]$ in Lemma \ref{lemma_regulairty} to derive
	\begin{align*}
		&\ \frac{d}{dt}\|x_t - y_t\|^2 \leq (2+L_v^2)\|x_t - y_t\|^2 + \|f(t,  x_t) - \AM[f](t,  x_t)\|^2 \\
		\Rightarrow&\ \frac{d}{dt} \exp(-t(2+L_v^2)) \|x_t - y_t\|^2 \leq \exp(-t(2+L_v^2)) \|f(t,  x_t) - \AM[f](t,  x_t)\|^2
	\end{align*}
	
	Integrate from $t=0$ to $\tau$. By noting that $x_0 = y_0$ and $\exp\left(-(2+L_v^2)t\right) < 1$, we have 
	\begin{align*}
		\exp\left(-(2+L_v^2)\tau\right)\|x_\tau - y_\tau\|^2 \leq
		\int_{0}^{\tau}  \|f(t,  x_t) - \AM[f](t,  x_t)\|^2 \ud t .
	\end{align*}
%	Recall the definition of $R(f)$ in equation \eqref{eqn_objective}.
	Take expectation with respect to $x_0 \sim \alpha_0$. We derive that for any $\tau\in[0, T]$
	\begin{equation} \label{eqn_proof_Wasserstein}
		W_2^2(\rho_{t}^{1}, \rho_{t}^{2}) \leq \int_\XM \|x_\tau - y_\tau\|^2 \ud \alpha_0(x_0) \leq \exp\left((2+L_v^2)T\right) R(f).
	\end{equation}
%	In conclusion, $R(f)\rightarrow 0$ implies $W_2(\rho^{1}_{\tau}, \rho^{2}_{\tau})\rightarrow 0$, for all $\tau \in [0, T]$.		
	%		The PDE corresponding to System II writes
	%		\begin{equation}
	%			\frac{\partial }{\partial t} p_{\bar \rho_t} = - \udiv\left(p_{\bar \rho_t}(\nabla \log p_{\rho_t} + \nabla V) \right).
	%		\end{equation}
	%		Equivalently, we can write (recall that $\rho_t$ is the distribution induced by System I)
	%		\begin{equation}
	%			\frac{\partial }{\partial t} p_{\rho_t} + \udiv\left(p_{\rho_t}(\nabla \log p_{\rho_t} + \nabla V) \right) = \frac{\partial }{\partial t} (p_{\rho_t} - p_{\bar \rho_t}) + \udiv\left((p_{\rho_t} - p_{\bar \rho_t})\nabla \log p_{\rho_t}\right).
	%		\end{equation}
	%		Take a test function $\phi$ with Lipschitz continous gradient.
	%		\begin{align*}
	%			&\ \frac{\partial }{\partial t} \int_{\XM} \phi(x) \left(p_{\rho_t}(x) + \udiv\left(\rho_t(\nabla \log p_{\rho_t}(x) + \nabla V(x)) \right)\right)\ud x \\
	%			=&\ \frac{\partial }{\partial t}\int_{\XM} \phi(x) (p_{\rho_t}(x) - p_{\bar \rho_t}(x)) + \phi(x)\udiv\left((p_{\rho_t}(x) - p_{\bar \rho_t}(x))\nabla \log p_{\rho_t}(x)\right) \ud x\\
	%			=&\ \frac{\partial }{\partial t}\int_{\XM} \phi(x) (p_{\rho_t}(x) - p_{\bar \rho_t}(x)) - [\nabla \phi(x)^\top \nabla \log p_{\rho_t}(x)] \left(p_{\rho_t}(x) - p_{\bar \rho_t}(x)\right) \ud x
	%		\end{align*}
	%		The first term on the R.H.S. of the above equation converges to 
\end{proof}	

\subsection{Proof of Lemma \ref{lemma_main}} \label{appendix_proof_lemma_score}
\begin{proof}
	For compactness, in this proof, we denote $f_{t}(x) = f(t, x)$ and $\AM[f]_t(x) = \AM[f](t, x)$.
	We use $\rho_{t}^{1}$ and $\rho_{t}^{2}$ to denote the probability density functions of systems (1) and (2) and use 
	$X_{t}$ and $Y_{t}$ to denote the corresponding particle maps (see Table \ref{table_ss}).
%	Since $\rho_0^{1} = \rho_0^{2} = \alpha_0$, we simply use $\alpha_0$ to denote the rest two.
	
%	Recall the pushforward operation $X_t$ in Section \ref{section_NWGF_as_NODE} and similarly define the pushforward operation $Y_t$ as the mapping from $y_0$ to $y_t$. 
%	Here $y_t$ is the output of System (2) after evolving for time $t$ (initialized from $y_0 = x_0$).
	Since both $\nabla \log \rho_{t}^1$ and $\nabla \log \rho_{t}^{2}$ are $l$-periodic, using the change of variable formula in Lemma \ref{lemma_change_of_variables} , we have
	\begin{align}
		\xi_t = \|\nabla\log\rho_{t}^{1}\circ Y_{t} - \nabla\log\rho_{t}^{2}\circ Y_{t}\|_{\alpha_0}^2
		%			=&\ \int_\XM \left(\int_0^t  \|\left(\nabla\log\rho_{t}^{1}\circ Y_t\right) (x_0) - \left(\nabla\log\rho_{t}^{2}\circ Y_t\right)(x_0)\|^2 \ud t\right) \ud{\alpha_0(x_0)}.
	\end{align}
	Denote $y_t = Y_{t}(y_0)$ and $x_t = X_{t}(x_0)$ with $y_0 = x_0$. For any $x_0$, we have
	\begin{align}
		\left(\nabla\log\rho_{t}^{1}\circ Y_t\right)(x_0) = \left(\nabla\log\rho_{t}^{1}(y_t) - \nabla\log\rho_{t}^{1}(x_t)\right) + \nabla\log\rho_{t}^{1}(x_t).
	\end{align}
	Hence $\xi_t$ can be bounded by
	\begin{align}
		\xi_t \leq \|\nabla\log\rho_{t}^{1}(y_t) - \nabla\log\rho_{t}^{1}(x_t)\|^2_{\alpha_0} + \|\nabla \log \rho_{t}^{1}(x_t) - \nabla \log \rho_{t}^{2}(y_t)\|^2_{\alpha_0}.
	\end{align}
	The first term is of the order $O(\|x_t - y_t\|^2_{\alpha_0^{2}})$ from the Lipschitz continuity of $\nabla\log\rho_{t}^{1}$.
	To bound the second term, note that $\nabla\log\rho_{t}^{1}(x_t)$ can be computed from from Proposition \ref{prop_score},
	\begin{equation*}
		\nabla \log \rho_{t}^{1}(x_t) = \nabla \log \alpha_0(x_0) - \int_0^t \nabla{\udiv\left(f_{\tau}(x_\tau)\right)} + \left[\nabla{f_{\tau}(x_\tau)}\right]^\top \nabla \log \rho_{t}^{1}(x_\tau) \ud \tau
	\end{equation*}
	and that $\left(\nabla\log\rho_{t}^{2}\circ Y_t\right) (y_0) = \nabla \log \rho_{t}^{2}(y_t)$ can be similarly computed as
	\begin{equation*}
		\nabla \log \rho_{t}^{2}(y_t) = \nabla \log \alpha_0(y_0) - \int_0^t \nabla{\udiv\left(\AM[f]_{\tau}(y_\tau)\right)} + \left[\nabla{\AM[f]_{\tau}(y_\tau)}\right]^\top \nabla \log \rho_{t}^{2}(y_\tau) \ud \tau.
	\end{equation*}
%	Recall that $\rho^{(2)}_0 = \rho^{(1)}_0 = \alpha_0$.
	Hence, the second term can be decomposed as follows:
	\begin{align*}
		\nabla \log \rho_{t}^{1}(x_t) - \nabla \log \rho_{t}^{2}(y_t) = \int_0^t \underbrace{\nabla{\udiv\left(\AM[f]_{\tau}(y_\tau)\right)} - \nabla{\udiv\left(f_{\tau}(x_\tau)\right)}}_{A_\tau} \ud \tau \\
		+ \int_0^t  \underbrace{\left[\nabla{\AM[f]_{\tau}(y_\tau)}\right]^\top \nabla \log \rho_{t}^{2}(y_\tau) - \left[\nabla{f_{\tau}(x_\tau)}x\right]^\top \nabla \log \rho_{t}^{1}(x_\tau)}_{B_\tau} \ud \tau.
	\end{align*}
%	To bound $A_\tau$ and $B_\tau$, we use the following inverse estimation (bound the norm of gradient by the function value).
	Recall that $\delta_{\tau} = f_{\tau} - \AM[f]_{\tau}$ in \eqref{eqn_delta_def}. 
	To bound the norm of $A_\tau$, we have
	\begin{align*}
		A_\tau =  \nabla \udiv\left(\AM[f]_{\tau}(y_\tau)\right) - \nabla \udiv\left(\AM[f]_{\tau}(x_\tau)\right) + \nabla \udiv(\delta_{\tau}(x_\tau))
		%			\\ = \|\int_{\XM} \|\nabla \udiv(\delta_{\tau}(x))\|^2 \ud \alpha_0(x)\| 
	\end{align*}
	and hence using $x_\tau \sim \rho_{\tau}^1 = X_{\tau}\sharp\alpha_0$ and the Lipschitz continuity of $\nabla\udiv\AM[f]_t$ we have
	\begin{align*}
		\|A_\tau\|^2_{\alpha_0} = O(\|y_\tau - x_\tau\|^2_{\alpha_0} + \|\nabla \udiv(\delta_{\tau})\|^2_{\rho_{\tau}^{1}}).
		%			\\ = \|\int_{\XM} \|\nabla \udiv(\delta_{\tau}(x))\|^2 \ud \alpha_0(x)\| 
	\end{align*}
	To bound the norm of $B_\tau$, note that
	\begin{align}
		B_\tau  =&\  \nabla \AM[f]_{\tau}(y_\tau)^\top \nabla \log \rho_{\tau}^{2}(y_\tau) - \nabla \AM[f]_{\tau}(y_\tau)^\top \nabla \log \rho_{\tau}^{1}(y_\tau) \tag{a} \\
		&\ + \nabla \AM[f]_{\tau}(y_\tau)^\top \nabla \log \rho_{\tau}^{1}(y_\tau) - \nabla \AM[f]_{\tau}(x_\tau)^\top \nabla \log \rho_{\tau}^{1}(y_\tau) \tag{b}\\
		&\ + \nabla \AM[f]_{\tau}(x_\tau)^\top \nabla \log \rho_{\tau}^{1}(y_\tau) - \nabla f_{\tau}(x_\tau)^\top \nabla \log \rho_{\tau}^{1}(y_\tau) \tag{c}\\
		&\ + \nabla f_{\tau}(x_\tau)^\top \nabla \log \rho_{\tau}^{1}(y_\tau) - \nabla f_{\tau}(x_\tau)^\top \nabla \log \rho_{\tau}^{1}(x_\tau) \tag{d}
		%			 - \nabla f_{\tau}(x){\vert x = x_\tau}^\top \nabla \log \rho_{\tau}^{1}(x_\tau)
	\end{align}
	Using the boundedness of $\nabla f_{\tau}$ and the Lipschitz continuity of $\nabla \log \rho_{\tau}^{1}$, we have $\|d\|^2_{\alpha_0} = O(\|x_\tau-y_\tau\|^2_{\alpha_0})$.
	Similarly, we have $\|b\|^2_{\alpha_0} = O(\|x_\tau-y_\tau\|^2_{\alpha_0})$.
	Note that 
	\begin{equation}
		c = - \nabla \delta_{\tau}(x_\tau)^\top \nabla \log \rho_{\tau}^{2}(y_\tau). 
	\end{equation}
	Using the boundedness of $\nabla \log \rho_{t}^{2}$, we have 
	\begin{equation}
		\|c\|^2_{\alpha_0} = O(\|\nabla \delta_{\tau}\|^2_{\rho_{t}^{1}}).
	\end{equation}
	Finally, using the boundedness of $\nabla \AM[f]_{\tau}$, we have that 
	\begin{equation}
		\|a\|^2_{\alpha_0} \leq L_v \|\nabla \log \rho_{\tau}^{2}\circ Y_\tau - \nabla \log \rho_{\tau}^{1}\circ Y_\tau\|^2_{\alpha_0} = L_v \|\nabla \log \rho_{\tau}^{2} - \nabla \log \rho_{\tau}^{1}\|^2_{\rho_{\tau}^{(2)}} = L_v\xi_\tau.
	\end{equation}
	Therefore, by noting that 
	\begin{equation}
		\|\nabla \log \rho_{t}^{1}(x_t) - \nabla \log \rho_{t}^{2}(y_t)\|^2_{\alpha_0} \leq \int_0^t \|A_\tau\|^2_{\alpha_0} + \|B_\tau\|^2_{\alpha_0} \ud \tau,
	\end{equation}
	we bound (note that $\|\delta_{\tau}\|_{\rho_{\tau}^{1}} = \|\delta_{\tau}\circ X_\tau\|_{\alpha_0}$)
	\begin{align*}
		\xi_t \leq &\ \int_0^t O(\|y_\tau - x_\tau\|^2_{\alpha_0} + \|\nabla \udiv(\delta_{\tau})\|^2_{\rho_{\tau}^{1}} + \|\nabla \delta_{\tau}\|^2_{\rho_{\tau}^{1}}) + L_v\xi_\tau \ud \tau \\
		 \leq &\ \int_0^t O(\|\delta_{\tau}\|^2_{\alpha_0} + \|\nabla \udiv(\delta_{\tau})\|^2_{\rho_{\tau}^{1}} + \|\nabla \delta_{\tau}\|^2_{\rho_{\tau}^{1}}) + L_v\xi_\tau \ud \tau\\
		 \leq &\ \int_0^t O(R(f)) + L_v\xi_\tau \ud \tau
		%			\rightarrow 0 \text{ as } \theta \to \theta^\gamma_*.
	\end{align*}
	where we use Lemma \ref{thm_sys_1_2} in the second inequality.
%	Define $C_t = \int_0^t O(\|y_\tau - x_\tau\|^2_{\alpha_0} + \|\nabla \udiv(\delta_{\tau})\|^2_{\rho_{\tau}^{1}} + \|\nabla \delta_{\tau}\|^2_{\rho_{\tau}^{1}}) \ud \tau$.
	Using the Gr\"onwall's inequality of the integral form for continuous functions, we have there exists some constant $\bar C(T)$ such that
	\begin{equation}
		\xi_t \leq \bar C(T) R(f) \exp(tL_v) \leq \bar C(T) R(f) \exp(TL_v) 
	\end{equation}
	Integrating $\tau$ from $0$ to $t$, we have for any $t\in[0, T]$
	\begin{equation}
		\int_0^t \xi_\tau \ud \tau \leq \bar C(T) T \exp(TL_v) R(f) = C(T) R(f),
	\end{equation}
	where we denote $C(T) = \bar C(T) T \exp(TL_v)$.
\end{proof}

%\section{My Proof of Theorem 1}
%
%This is a boring technical proof.
%
%\section{My Proof of Theorem 2}
%
%This is a complete version of a proof sketched in the main text.

\end{document}